\documentclass{article}

\usepackage{lmodern}
\usepackage{times}
\usepackage{soul}
\usepackage{url}
\usepackage[utf8]{inputenc}
\usepackage[small]{caption}
\usepackage{graphicx}
\usepackage{amsmath,amssymb,amsthm}
\usepackage{amsfonts}       
\usepackage{mathtools}
\usepackage{bm} 
\usepackage{booktabs}
\usepackage{algorithm}
\usepackage{algorithmic}
\usepackage[switch]{lineno}
\usepackage{multirow}
\usepackage{multicol}
\usepackage{subfigure}
\usepackage{booktabs}
\usepackage{makecell}
\usepackage{xcolor}

\usepackage[accepted]{icml2026}

\theoremstyle{plain}
\newtheorem{theorem}{Theorem}[section]

\theoremstyle{definition}

\newtheorem{assumption}[theorem]{Assumption}
\theoremstyle{remark}

\usepackage[textsize=tiny]{todonotes}

\icmltitlerunning{Mitigating Membership Inference in Intermediate Representations with Differentially Private Training}

\begin{document}

\twocolumn[
  \icmltitle{Mitigating Membership Inference in Intermediate Representations with Differentially Private Training}



  \icmlsetsymbol{equal}{*}

  \begin{icmlauthorlist}
    \icmlauthor{Jiayang Meng}{yyy}
    \icmlauthor{Tao Huang}{comp}
    \icmlauthor{Chen Hou}{comp}
    \icmlauthor{Guolong Zheng}{comp}
    \icmlauthor{Hong Chen}{yyy}
    
  \end{icmlauthorlist}

  \icmlaffiliation{yyy}{School of Information, Renmin University of China, Beijing, China}
  \icmlaffiliation{comp}{School of Computer Science and Big Data, Minjiang University, Fuzhou, Fujian, China}

  \icmlkeywords{Machine Learning, ICML}

  \vskip 0.3in
]



\printAffiliationsAndNotice{}  

\begin{abstract}
  In Embedding-as-an-Interface (EaaI) settings, pre-trained models are queried for Intermediate Representations (IRs). The distributional properties of IRs can leak training-set membership signals, enabling Membership Inference Attacks (MIAs) whose strength varies across layers. Although Differentially Private Stochastic Gradient Descent (DP-SGD) mitigates such leakage, existing implementations employ per-example gradient clipping and a uniform, layer-agnostic noise multiplier, ignoring heterogeneous layer-wise MIA vulnerability. This paper introduces Layer-wise MIA-risk-aware DP-SGD (LM-DP-SGD), which adaptively allocates privacy protection across layers in proportion to their MIA risk. Specifically, LM-DP-SGD trains a shadow model on a public shadow dataset, extracts per-layer IRs from its train/test splits, and fits layer-specific MIA adversaries, using their attack error rates as MIA-risk estimates. Leveraging the cross-dataset transferability of MIAs, these estimates are then used to reweight each layer's contribution to the globally clipped gradient during private training, providing layer-appropriate protection under a fixed noise magnitude. We further establish theoretical guarantees on both privacy and convergence of LM-DP-SGD. Extensive experiments show that, under the same privacy budget, LM-DP-SGD reduces the peak IR-level MIA risk while preserving utility, yielding a superior privacy-utility trade-off.
\end{abstract}

\section{Introduction}
In Embedding-as-an-Interface (EaaI) settings, pre-trained models are exposed as APIs that transform inputs into Intermediate Representations (IRs), which are then consumed by downstream services. This paradigm appears in multiple deployment modes: (i) knowledge distillation \cite{heo2019comprehensive, liu2023function,gong2025beyond}, where a high-capacity, typically frozen teacher model is queried with student’s inputs to produce intermediate embeddings that serve as “hints” for student learning; (ii) Embedding-as-a-Service \cite{thakur2021beir, muennighoff2022mteb}, where cloud-hosted embedding APIs return IRs that downstream modules for retrieval, classification, or clustering directly consume, without exposing model internals; and (iii) modular system design \cite{houlsby2019parameter, wu2024reft}, where a frozen encoder generates IRs that are processed by lightweight adapters or task-specific heads, enabling rapid task adaptation and scalable deployment.

However, exposing IRs from pre-trained models can inadvertently leak training-set membership information. Model optimization tends to concentrate training examples ('members') into high-density regions of the hidden-state manifold, producing systematic separations from non-training examples ('non-members') \cite{zhu2021geometric, wu2024you}. As a result, an adversary with query access to IRs can exploit these separations to launch Membership Inference Attacks (MIAs) \cite{shokri2017membership, nasr2019comprehensive, hu2022membership}, threatening data confidentiality.

Notably, prior work indicates that the susceptibility of IRs to MIAs is inherently layer-dependent, with deeper layers typically encoding stronger membership signals \cite{rezaei2021difficulty,maini2023can,wang2024localizing}. This behavior arises from the hierarchical nature of feature extraction: shallow layers tend to capture generic, low-level features with weaker membership signals, whereas deeper layers encode task-specific semantics that amplify the separation between members and non-members. Consequently, MIA risk varies across layers, and a model’s overall vulnerability to MIAs is governed by the most susceptible layer.

\begin{figure*}[t]
\centering
  \includegraphics[width=0.96\textwidth]{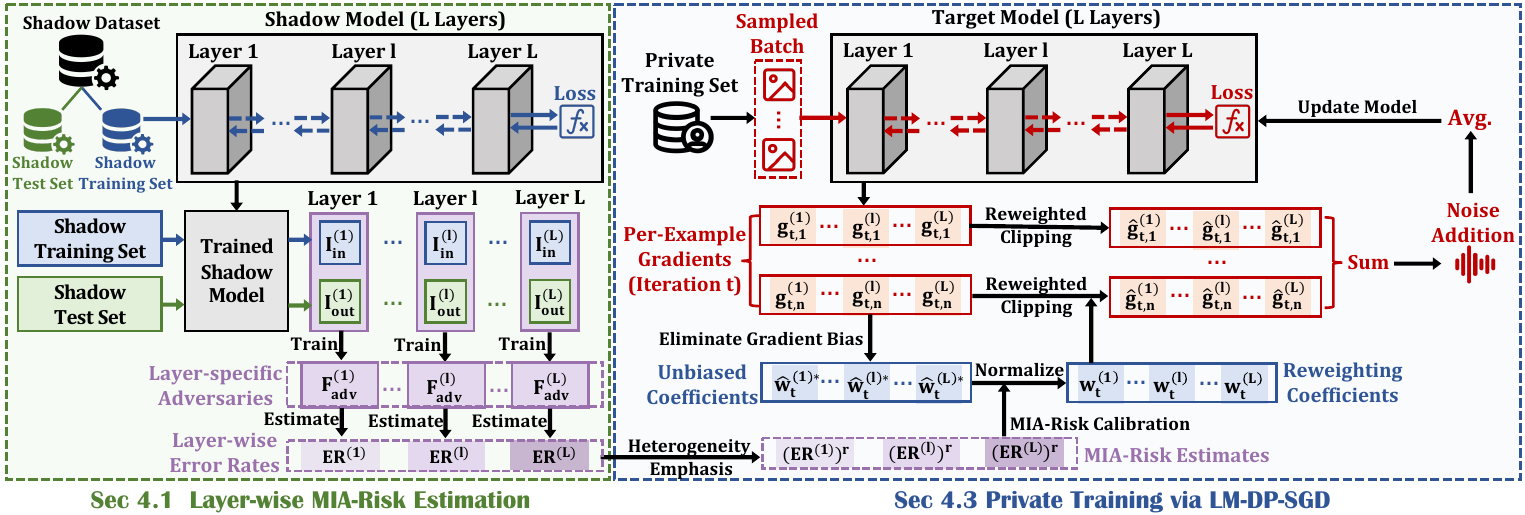}
  \caption{An overview of LM-DP-SGD, which comprises two components: (i) Layer-wise MIA-risk estimation (Section \ref{4.1}), which trains layer-specific adversaries on a public shadow dataset to assess MIA risk of each layer; and (ii) Private training via LM-DP-SGD (Section \ref{4.2}), a differentially private optimization procedure that leverages these estimated risks to perform layer-wise reweighted clipping to per-example gradients before noise injection, allocating protection proportionally to layer vulnerability.}
  \label{fig:oveview}
\end{figure*}

The predominant defense, Differentially Private Stochastic Gradient Descent (DP-SGD) \cite{abadi2016deep}, ignores layer-wise heterogeneity in MIA risk. Current DP-SGD methods enforce $(\varepsilon,\delta)$-Differential Privacy (DP) \cite{dwork2006calibrating, dwork2008differential} by clipping per-example gradients and adding Gaussian noise uniformly across layers. Such uniform perturbation can over-noise less vulnerable layers while under-protecting more sensitive ones, leading to a suboptimal privacy–utility trade-off \cite{bagdasaryan2019differential}. A natural layer-wise variant is to privatize each layer independently; however, composition theorems \cite{kairouz2015composition} incur substantial privacy budget overhead, making such methods impractical for deep models.

To address these limitations, we propose Layer-wise MIA-risk-aware DP-SGD (LM-DP-SGD), a novel DP-SGD variant that adaptively tailors privacy protection across layers in proportion to their MIA risks, as shown in Figure \ref{fig:oveview}. Our contributions can be summarized as follows.

\begin{itemize}
    \item We introduce LM-DP-SGD, a differentially private training framework that (i) trains layer-specific adversaries on a public shadow dataset to estimate per-layer MIA risk; and (ii) uses these estimates to guide layer-wise reweighted clipping of per-example gradients before noise addition, thus modulating the protective effect of fixed-magnitude noise across layers.
    \item  We theoretically analyze the privacy and convergence properties of LM-DP-SGD. We prove that it preserves the same DP guarantees as DP-SGD. We also characterize its convergence behavior and quantify the bias induced by layer-wise reweighted clipping.
    \item Extensive experiments demonstrate that, compared to the baselines, LM-DP-SGD markedly reduces the peak IR-level MIA risk while preserving model utility, thus achieving an improved privacy–utility trade-off.
\end{itemize}

\section{Related Work}
\subsection{Gradient Clipping Strategies in DP-SGD}

DP-SGD \cite{abadi2016deep}, a cornerstone of privacy-preserving deep learning, modifies the mini-batch Stochastic Gradient Descent (SGD) with per-example gradient clipping and calibrated Gaussian noise injection to ensure formal DP guarantees. Given a per-example gradient $g$, clipping is defined as $\text{Clip}(g, C) = g \cdot \min(1, \frac{C}{\|g\|_2})$, which bounds the per-example gradient $\ell_2$-norm by a pre-defined threshold $C$, thus constraining the influence of any individual data point on the model update. However, the effectiveness of DP-SGD is critically dependent on the selection of $C$ \cite{andrew2021differentially, golatkar2022mixed,he2022exploring, pichapati2019adaclip}. An overly large $C$ decreases the effect of clipping and requires injecting more noise (proportional to $C$), potentially degrading utility. In contrast, an excessively small $C$ distorts gradient signals, hindering convergence.

To mitigate this sensitivity, adaptive clipping strategies have emerged. Auto-S \cite{bu2023automatic} and NSGD \cite{yang2022normalized} normalize gradients by implicitly absorbing $C$ into the effective learning rate of non-adaptive optimizers, driving gradients toward unit norm with a stabilizer $r$: $\text{Clip}(g,C=1) = \frac{g}{\| g \|_2 + r}$. DP-PSAC \cite{xia2023differentially} further introduces a non-monotonic clipping function: $\text{Clip}(g,C)=\frac{C}{\|g\|_2+r/\left(\|g\|_2+r\right)}$, which preserves the clipping bound ($\leq C$) required for privacy while reducing deviation from the true mini-batch-averaged gradients. By avoiding over-emphasis on small-norm gradients and limiting the effect of large-norm gradients, DP-PSAC effectively balances gradient distortion and noise magnitude.

\begin{figure*}[ht]
\centering
\includegraphics[width=1.00\textwidth]{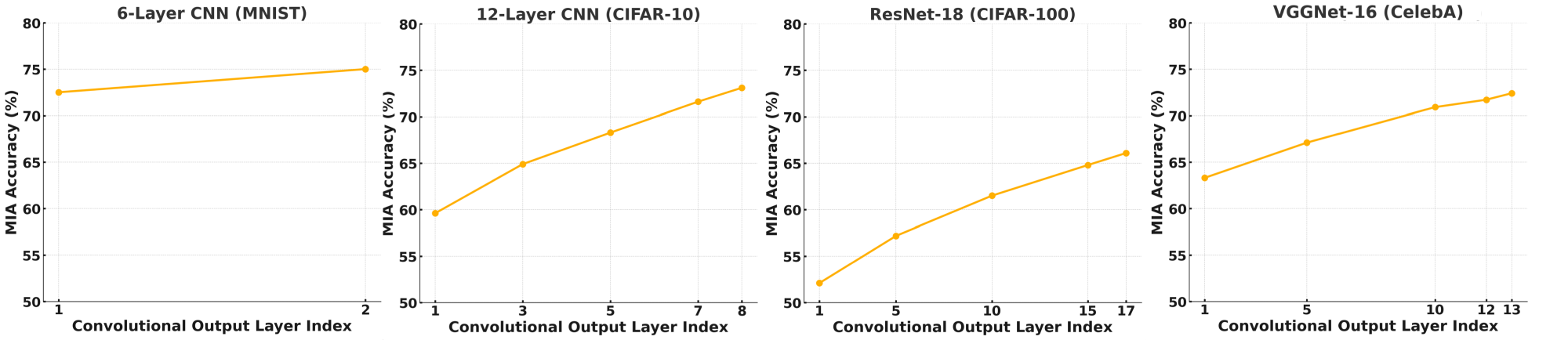}
\caption{MIA accuracy using intermediate representations from different convolutional layers.}
\label{evidence}
\end{figure*}

\subsection{DP-SGD Limitations for Layer-wise Privacy}
In EaaI settings, privacy of IRs is the concern. Yet standard DP-SGD and its adaptive clipping variants overlook heterogeneity in layer-wise MIA risk. By clipping the entire gradient to a bound $C$ and adding calibrated Gaussian noise, they neglect per-layer MIA-risk variations. Consequently, deeper layers—often more vulnerable to MIAs—receive insufficient protection, while shallower, less sensitive layers are over-perturbed, ultimately degrading the privacy–utility trade-off. 

An intuitive approach to implement layer-wise DP-SGD is to enforce DP at the granularity of individual layers, assigning each layer $l$ a distinct budget $(\varepsilon_l, \delta_l)$. However, by DP composition theorems \cite{kairouz2015composition}, the cumulative privacy cost $(\varepsilon, \delta)$ scales approximately linearly with model depth, i.e., $\varepsilon \approx \sum_l \varepsilon_l$. This rapid privacy budget accumulation makes naive per-layer privatization costly and impractical for deep models.

\section{Preliminaries}

\subsection{DP-SGD}\label{DP-SGD}

Standard Differentially Private Stochastic Gradient Descent (DP-SGD) enforces $(\varepsilon, \delta)$-DP by bounding each individual example’s contribution to the gradient update. At iteration $t\in\{0,...,T-1\}$, a mini-batch $\mathcal{B}_t$ is sampled from the private training set $D=\{(\mathbf{x}_n,\mathbf{y}_n)\}_{n=1}^N$ with probability $q \approx \frac{|\mathcal{B}_t|}{N}$. Subsequently, for each sample $\mathbf{x}_i \in \mathcal{B}_t$, its gradient $G_{t,i}=\nabla F(\mathbf{x}_i, W_t)$ is clipped as $\tilde{G}_{t,i} = G_{t,i} \cdot \frac{C_{t,i}}{\|G_{t,i}\|_2}$, where $C_{t,i} = \min(C, \|G_{t,i}\|_2)$ is the clipped $\ell_2$-norm, satisfying $C_{t,i}\le C$. The model parameters are then updated as $W_{t+1} \leftarrow W_t - \frac{\eta_t}{|\mathcal{B}_t|} \left( \sum_{\mathbf{x}_i \in \mathcal{B}_t} \tilde{G}_{t,i} + \mathcal{N}(0, C^2 \sigma^2 I) \right)$, where $\eta_t$ is the learning rate and $\sigma$ is the noise multiplier, calibrated via accounting methods such as the moments accountant \cite{wang2019subsampled} to ensure $(\varepsilon,\delta)$-DP.

\subsection{Layer-wise Heterogeneity of MIA Risk}\label{sec:evidence}
Intermediate Representations (IRs)—outputs of hidden layers in models—are vulnerable to Membership Inference Attacks (MIAs), which aim to determine whether a specific example was used for model training. Previous work \cite{shokri2017membership,nasr2019comprehensive,song2021systematic} indicates that this vulnerability is layer-dependent, with deeper IRs typically leaking more membership information. Our empirical results in Figure~\ref{evidence} further corroborate this trend: IRs from deeper layers consistently exhibit higher attack accuracy, indicating elevated MIA risk.

\section{Proposed Method: LM-DP-SGD}

This section introduces Layer-wise MIA-risk-aware DP-SGD (LM-DP-SGD), depicted in Figure \ref{fig:oveview}. Key symbols and definitions are summarized in Appendix \ref{appendix notation}.

\subsection{Layer-wise MIA-Risk Estimation}\label{4.1}

Prior to private training, we assess per-layer MIA risk through attack simulation. Given white-box access to the target model $F$, we (model trainers) instantiate a shadow model $F_{\text{shadow}}$ with an identical architecture. Building on the prior evidence that MIAs can transfer across datasets—i.e., adversaries trained on one dataset can generalize to another \cite{salem2018ml}—we train layer-specific MIA adversaries using a public shadow dataset $D_{\text{shadow}}=\{(\textbf{x}_n,\textbf{y}_n)\}_{n=1}^{N_{\text{s}}}$. The resulting MIA risk estimates derived from $D_{\text{shadow}}$ serve as reliable proxies for those on the private training set $D$, incurring no additional privacy cost. Specifically, we first partition $D_{\text{shadow}}$ into disjoint training and test subsets: $D_{\text{shadow}}^{\text{train}}=\{(\mathbf{x}_{n}^{'\text{in}'},\mathbf{y}_n^{'\text{in}'})\}_{n=1}^{N_{\text{in}}}$ and $D_{\text{shadow}}^{\text{test}}=\{(\mathbf{x}_{n}^{'\text{out}'},\mathbf{y}_n^{'\text{out}'})\}_{n=1}^{N_{\text{out}}}$, and train $F_{\text{shadow}}$ on $D_{\text{shadow}}^{\text{train}}$. Subsequently, we initialize per-layer IR sets for members and non-members, $\{I_{\text{in}}^{(l)}=\varnothing\}_{l=1}^{L},\{I_{\text{out}}^{(l)}=\varnothing\}_{l=1}^{L}$. All examples from $D_{\text{shadow}}^{\text{train}}$ (members) and $D_{\text{shadow}}^{\text{test}}$ (non-members) are then forwarded through $F_{\text{shadow}}$ to populate these sets, yielding $\{I_{\text{in/out}}^{(l)}\}_{l=1}^L$, where $I_{\text{in/out}}^{(l)}=\{I^{(l)}(\mathbf{x})\mid \mathbf{x} \in D_{\text{shadow}}^{\text{train/test}}\}$, and $I^{(l)}(\mathbf{x})$ denotes the IR of example $\mathbf{x}$ at layer $l$. For each layer $l$, we construct an adversary dataset $D_{\text{adv}}^{(l)}$ by pairing IRs in $I^{(l)}$ with their corresponding membership labels (1 for members, 0 for non-members), and use it to train a layer-specific adversary $F_{\text{adv}}^{(l)}(I^{(l)}(\mathbf{x});W^{(l)})$. Each adversary is then evaluated on its full training set to obtain an Error Rate (ER), producing $\{\text{ER}^{(l)}\}_{l=1}^L$. These per-layer ERs serve as MIA-risk estimates to guide layer-wise privacy budget allocation during target model training. The complete algorithmic procedure is detailed in Appendix~\ref{estimation}.

\subsection{Design of Layer-wise DP-SGD}\label{design}
To account for the heterogeneous vulnerability of IRs across layers to MIAs, we extend the standard DP-SGD (Section~\ref{DP-SGD}) to a layer-wise formulation. At iteration $t$, we introduce a reweighting vector $w_t = [w_t^{(1)}, \dots, w_t^{(L)}]$ constrained by $\sum_{l=1}^L (w_t^{(l)})^2 = 1$. Each component $w_t^{(l)}$ rescales the contribution of layer $l$'s $\ell_2$-norm relative to the overall gradient norm. Rather than clipping the entire gradient for each $\mathbf{x}_i\in \mathcal{B}_t$, we apply layer-wise reweighted clipping mechanism, yielding $\hat{G}_{t,i}=[\hat{g}_{t,i}^{(1)},...,\hat{g}_{t,i}^{(L)}]$. Concretely, each $G_{t,i}$ is decomposed into layer-wise components $G_{t,i}=[g_{t,i}^{(1)}, \dots, g_{t,i}^{(L)}]$. For each layer $l$, we define the normalized direction $d_{t,i}^{(l)} = \frac{g_{t,i}^{(l)}}{\|g_{t,i}^{(l)}\|_2}$ (or $0$ if $\|g_{t,i}^{(l)}\|_2 =0$) and compute the reweighted clipped gradient:
\begin{equation} 
    \hat{g}_{t,i}^{(l)} = C_{t,i} \cdot w_t^{(l)} \cdot d_{t,i}^{(l)} = C_{t,i} \cdot w_t^{(l)} \cdot \frac{g_{t,i}^{(l)}}{\|g_{t,i}^{(l)}\|_2},
\label{eq2}
\end{equation}
where $C_{t,i} = \min(C, \|G_{t,i}\|_2)$ enforces gradient clipping and $w_t$ reweights layer contributions. The squared $\ell_2$-norm of the layer-wise reweighted-and-clipped gradient is $\|\hat{G}_{t,i}\|_2^2 = \sum_{l=1}^L \|\hat{g}_{t,i}^{(l)}\|_2^2  = \sum_{l=1}^L C_{t,i}^2 (w_t^{(l)})^2 \|d_{t,i}^{(l)}\|_2^2$. Since each normalized direction satisfies $\|d_{t,i}^{(l)}\|_2^2 = 1$ (or $0$ when $\|g_{t,i}^{(l)}\|_2 =0$), this expression simplifies to $\|\hat{G}_{t,i}\|_2^2 = C_{t,i}^2 \sum_{l=1}^L (w_t^{(l)})^2$. Under the constraint $\sum_{l=1}^L (w_t^{(l)})^2 = 1$, the global $\ell_2$-norm of $\hat{G}_{t,i}$ is identical to that of the clipped gradient in standard DP-SGD, satisfying
\begin{equation}
\|\hat{G}_{t,i}\|_2 = C_{t,i} = \min(C, \|G_{t,i}\|_2) \le C.
\end{equation}
Thus, layer-wise reweighted clipping redistributes gradient magnitudes across layers while preserving the global $\ell_2$-norm bound $C$ required for privacy accounting. This enables layer-specific privacy allocation without increasing privacy budget. Accordingly, the model update is given by
\begin{equation}
    W_{t+1} \leftarrow W_t - \frac{\eta_t}{|\mathcal{B}_t|} \left( \sum_{\mathbf{x}_i \in \mathcal{B}_t} \hat{G}_{t,i} + \mathcal{N}(0, C^2 \sigma^2 I) \right).
     \label{eq:ladpsgd_update}
\end{equation}

\subsection{Private Training via LM-DP-SGD}\label{4.2}

Using the per-layer MIA risk estimates from Section \ref{4.1}, we train the target model $F$ using LM-DP-SGD. Within the layer-wise DP-SGD framework described in Section \ref{design}, at each iteration $t$, we optimize a layer-wise reweighting vector $w_t$ to balance two competing objectives: utility and privacy. On the utility side, our objective is to minimize the gradient bias induced by layer-wise reweighted clipping. Let $\bar{G}_t=\frac{1}{|\mathcal{B}_t|}\sum_{\mathbf{x}_i \in \mathcal{B}_t}\hat{G}_{t,i}$ denote the mini-batch-averaged layer-wise reweighted-and-clipped gradient, and let $\nabla F(W_t)=\mathbb{E}_{\textbf{x}_i\sim D}\left[\nabla F(\mathbf{x}_i;W)\right]$ be the expected true gradient. Their discrepancy can be decomposed into a layer-wise reweighted clipping bias ($b_t = \mathbb{E}[\hat{G}_{t,i}] - \nabla F(W_t)$ and sampling stochasticity ($ V_t=\bar{G}_t-\mathbb{E}[\hat{G}_{t,i}]$), such that $\bar{G}_t=\nabla F(W_t)+b_t+V_t$. In practice, $V_t$ is typically negligible, while $b_t$ dominates the bias and can distort parameter updates, slow convergence, and degrade final utility. To alleviate these effects, we optimize $w_t$ by solving $\min_{w_t}\|b_t\|_2^2$ s.t. $\sum_{l=1}^{L}(w_t^{(l)})^2=1$, which admits a closed-form solution via Lagrange multipliers (see Appendix~\ref{appendix lag}). On the privacy side, as shown in Section~\ref{sec:evidence}, the vulnerability of IRs to MIAs is layer-dependent and generally increases with network depth. Accordingly, we calibrate $w_t$ to assign stronger protection to more vulnerable (typically deeper) layers, yielding a MIA-risk-aware reweighting scheme that accounts for layer-wise membership exposure.
\begin{algorithm*}[ht]
\caption{Private Training via LM-DP-SGD.}
{\raggedright
\textbf{Inputs}: target model $F(\mathbf{x};W_0)$ with $L$ layers and randomly initialized $W_0$, training set $D$ of $N$ samples, estimated per-layer MIA risks (quantified by error rates) $\{\text{ER}^{(l)}\}_{l=1}^L$.\\
\textbf{Parameters}: sampling probability $q$, clipping threshold $C$, learning rate $\{\eta_t\}_{t=0}^{T-1}$, privacy budget $(\varepsilon, \delta)$, training iteration $T$, layer-wise MIA-risk emphasis factor $r$.\\
}
\vspace{0ex}
\begin{algorithmic}[1] 
\STATE Compute noise multiplier $\sigma$ s.t. $\varepsilon_{\text{Accountant}}(\delta, \sigma, q, T) \leq \varepsilon$ (e.g., via moments accountant).
\FOR{Iteration $t = 0$ to $T-1$}
    \STATE Sample a mini-batch: $\mathcal{B}_t \sim \text{Poisson}(q;D)$.\\
    \FOR{Each example $\mathbf{x}_i \in \mathcal{B}_t$}
    \STATE Compute the corresponding per-example gradient: $G_{t,i} = \nabla F(\mathbf{x}_i,W_t)=[g_{t,i}^{(1)}, ..., g_{t,i}^{(L)}]$.\\
    \STATE Compute the $\ell_2$-norm of the clipped gradient in standard DP-SGD: $C_{t,i} = \min(C, \|G_{t,i}\|_2)$.
    \ENDFOR
    \FOR{Layer $l=1$ to $L$}
        \STATE Calculate MIA-risk-aware reweighting coefficient for layer $l$: \\$\hat{w}_t^{(l)*} = \frac{1}{|\mathcal{B}_t|}\sum_{\mathbf{x}_i \in \mathcal{B}_t}\frac{\|g_{t,i}^{(l)}\|_2}{C_{t,i}}$, $\tilde{w}^{(l)*}_t = \hat{w}_{t}^{(l)}\cdot ( \text{ER}^{(l)})^r$, $w^{(l)}_t = \frac{\tilde{w}^{(l)*}_t}{\sqrt{\sum_{m=1}^L (\tilde{w}^{(m)*}_t)^2}}$.\\
    \ENDFOR
    \STATE Initialize per-example layer-wise reweighted-and-clipped gradients: $\{\hat{G}_{t,i} = \mathbf{0}\}_{\mathbf{x}_i \in \mathcal{B}_t}$.
    \FOR{Each example $\mathbf{x}_i \in \mathcal{B}_t$}
        \FOR{Layer \( l = 1 \) to \( L \)}
            \STATE Compute unit direction: $d_{t,i}^{(l)} = \frac{g_{t,i}^{(l)}}{\|g_{t,i}^{(l)}\|_2}$ (or $0$ if $\|g_{t,i}^{(l)}\|_2=0$).\\
            \STATE Compute layer-$l$ component of $\hat{G}_{t,i}$ and assign it accordingly: $\hat{g}_{t,i}^{(l)} = C_{t,i} \cdot w^{(l)}_t \cdot d_{t,i}^{(l)}$.
        \ENDFOR
    \ENDFOR
    \STATE Compute the differential private update gradient: $\hat{G}_t = \frac{1}{|\mathcal{B}_t|}\left[\sum_{\mathbf{x}_i \in \mathcal{B}_t} \hat{G}_{t,i}+\mathcal{N}(0, C^2 \sigma^2 I)\right]$.
    \STATE Update model parameters: $W_{t+1} = W_{t} - \eta_t \cdot \hat{G}_t$.
\ENDFOR
\STATE \textbf{return} Trained differentially private model $F(\mathbf{x};W_{T})$.
\end{algorithmic}
\label{alg:LR-DP-SGD}
\end{algorithm*}

To balance the two objectives, we assign the layer-wise reweighting coefficients through the following steps.
\begin{enumerate}
\item \textit{Initialization to eliminate layer-wise clipping bias}. For optimal convergence, we initialize $w_t$ to eliminate the bias between $\mathbb{E}[\hat{G}_{t,i}]$ and $\nabla F(W)$, enforcing the unbiasedness condition $\mathbb{E}[\hat{G}_{t,i}]-\nabla F(W)=\textbf{0}$. In practice, sampling-induced randomness is typically negligible; thus, to reduce computational overhead, we approximate $\mathbb{E}[\hat{G}_{t,i}] \simeq \bar{G}_t$ and $\nabla F(W) \simeq \mathbb{E}_{\mathbf{x}_i\sim \mathcal{B}_t}\left[\nabla F(\mathbf{x}_i; W)\right]$. From Eq.(\ref{eq2}), the layer-$l$ component of $\bar{G}_t$ is $w_t^{(l)} \frac{1}{|\mathcal{B}_t|}\sum_{\mathbf{x}_t\in \mathcal{B}_t}\left[C_{t,i}\frac{g_{t,i}^{(l)}}{\|g_{t,i}^{(l)}\|_2}\right]$, and the corresponding component of $\mathbb{E}_{\mathbf{x}_i\sim \mathcal{B}_t}\left[\nabla F(\mathbf{x}_i; W)\right]$ is $\mathbb{E}_{\mathbf{x}_i \sim \mathcal{B}_t}\left[g_{t,i}^{(l)}\right]$. Equating these two expressions yields the initialization: $\hat{w}_t^{(l)*} = \frac{1}{|\mathcal{B}_t|}\sum_{\mathbf{x}_i \in \mathcal{B}_t}\frac{\|g_{t,i}^{(l)}\|_2}{C_{t,i}}$.
\item \textit{Layer-wise MIA-risk calibration.} We then calibrate the initialized coefficients using per-layer MIA risk estimates from Section~\ref{4.1}. For each layer $l$, a lower $\text{ER}^{(l)}$ indicates that the layer retains stronger membership signals and is more susceptible to MIAs. To mitigate this vulnerability, we downweight its initialized coefficient $w_t^{(l)*}$ to enhance the relative protection afforded by the fixed-magnitude noise applied uniformly across layers. To control the emphasis on layer-wise MIA-risk heterogeneity, we introduce an emphasis factor $r\ge 1$, yielding adjusted risks $(\text{ER}^{(l)})^{r}$ and calibrated coefficients $\tilde{w}^{(l)*}_t = w^{(l)*}_t\cdot(\text{ER}^{(l)})^r$. A larger $r$ concentrates stronger protection on high-risk layers at the expense of less vulnerable ones, potentially degrading utility. In practice, $r$ is tuned to balance privacy and utility.
\item \textit{$\ell_2$-normalization.} Finally, the unit-norm constraint is enforced via normalization: $w^{(l)}_t = \frac{\tilde{w}^{(l)*}_t}{\sqrt{\sum_{m=1}^L (\tilde{w}^{(m)*}_t)^2}}$.
\end{enumerate}
With $w_t = [w^{(1)}_t,...,w^{(L)}_t]$, for each example $\mathbf{x}_i \in \mathcal{B}_t$ and layer $l = \{1,\dots,L\}$, we apply layer-wise reweighting and clipping to obtain $\hat{g}_{t,i}^{(l)} = C_{t,i} \cdot w^{(l)}_t \cdot d_{t,i}^{(l)}$, where $d_{t,i}^{(l)} = \frac{g_{t,i}^{(l)}}{\|g_{t,i}^{(l)}\|_2}$, and assign it to the layer-$l$ component of $\hat{G}_{t,i}$. The model parameters are then updated as $W_{t+1} = W_{t} - \eta_t \cdot \frac{1}{|\mathcal{B}_t|}\big[\sum_{\mathbf{x}_i \in \mathcal{B}_t} \hat{G}_{t,i}+\mathcal{N}(0, C^2 \sigma^2 I)\big]$. The complete training procedure via LM-DP-SGD is presented in Algorithm \ref{alg:LR-DP-SGD}.

\section{Theoretical Analysis}
\subsection{Privacy Guarantee}
LM-DP-SGD offers rigorous privacy guarantees, as formalized in Theorem~\ref{thm:privacy}, by bounding the $\ell_2$-sensitivity of the aggregated gradients before noise injection. The consistent $\ell_2$-norm of layer-wise reweighted-and-clipped gradients with standard DP-SGD allows for direct application of DP-SGD's established moments accountant framework. The proof of Theorem \ref{thm:privacy} is detailed in Appendix \ref{proof of privacy}.

\begin{theorem}[Privacy guarantee of LM-DP-SGD] \label{thm:privacy}
Algorithm~\ref{alg:LR-DP-SGD}, which applies a layer-wise reweighting vector $w_t=[w_t^{(1)},...,w_t^{(L)}]$ constrained by $\sum_{l=1}^L (w_t^{(l)})^2 = 1$ and adds noise $\mathcal{N}(0, C^2 \sigma^2 I)$ to the aggregated reweighted-and-clipped gradients $\sum_{\mathbf{x}_i \in \mathcal{B}_t} \hat{G}_{t,i}$ at each iteration $t$, guarantees $(\varepsilon, \delta)$-differential privacy under the same conditions as standard DP-SGD, with identical clipping threshold $C$, sampling probability $q$, and noise multiplier $\sigma$. In particular, by the moments accountant, there exist constants $c_1, c_2$ such that for any $\varepsilon \leq c_1 q^2 T$ and $\delta > 0$, choosing
\begin{align} \label{eq:sigma_bound}
    \sigma \geq c_2 \frac{q\sqrt{T\log(1/\delta)}}{\varepsilon}
\end{align}
ensures $(\varepsilon, \delta)$-differential privacy after $T$ iterations.
\end{theorem}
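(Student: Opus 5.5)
The plan is to reduce Theorem~\ref{thm:privacy} to the standard DP-SGD analysis of \cite{abadi2016deep}. The only place where LM-DP-SGD differs from DP-SGD is the map $G_{t,i}\mapsto \hat{G}_{t,i}$. If we can show that, conditioned on the past, each iteration is an instance of the subsampled Gaussian mechanism with $\ell_2$-sensitivity at most $C$, then the moments accountant bound, and hence the stated condition on $\sigma$, carries over verbatim.

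\textbf{Step 1: per-example norm bound.} Using the calculation of Section~\ref{design}, I would show that for any fixed unit vector $w_t$ (i.e., $\sum_l (w_t^{(l)})^2=1$),
\[
\|\hat{G}_{t,i}\|_2^2=C_{t,i}^2\sum_{l:\,g_{t,i}^{(l)}\neq 0}(w_t^{(l)})^2\le C_{t,i}^2 .
\]
Here I would note that the case of zero layers gives an inequality rather than an equality, which is all we need. Hence $\|\hat{G}_{t,i}\|_2\le C_{t,i}\le C$.

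\textbf{Step 2: sensitivity and the subsampled Gaussian mechanism.} Under add/remove-one adjacency with Poisson sampling at rate $q$, I would argue that the sum $\sum_{\mathbf{x}_i\in\mathcal{B}_t}\hat{G}_{t,i}$ changes by at most one term, whose norm is at most $C$. Releasing this sum plus $\mathcal{N}(0,C^2\sigma^2 I)$ is then exactly the sampled Gaussian mechanism analysed in \cite{abadi2016deep}. Its log-moment satisfies $\alpha(\lambda)\le q^2\lambda(\lambda+1)/((1-q)\sigma^2)+O(q^3\lambda^3/\sigma^3)$ for $\sigma\ge 1$ and $\lambda\le\sigma^2\log(1/(q\sigma))$.

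\textbf{Step 3: composition over iterations.} The parameters $W_{t+1}$ are post-processing of the noisy release and $W_t$. Therefore adaptive composition of the log-moments over the $T$ iterations gives $\alpha_{\text{tot}}(\lambda)\le T q^2\lambda^2/\sigma^2$ up to constants. The tail bound $\delta=\min_\lambda \exp(\alpha_{\text{tot}}(\lambda)-\lambda\varepsilon)$ then yields Eq.~(\ref{eq:sigma_bound}) with the same constants $c_1,c_2$ as Theorem~1 of \cite{abadi2016deep}, valid for $\varepsilon\le c_1q^2T$. The ER values come from the public $D_{\text{shadow}}$, so they are data-independent constants and incur no cost.

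\textbf{Main obstacle.} Step 2 silently assumes that $w_t$ does not depend on the private batch. In Algorithm~\ref{alg:LR-DP-SGD}, however, $\hat{w}_t^{(l)*}$ is an average of $\|g_{t,i}^{(l)}\|_2/C_{t,i}$ over $\mathcal{B}_t$. Adding or removing one example therefore perturbs $w_t$, and with it \emph{every} term $\hat{G}_{t,j}$. The naive bound on the change of the sum is then of order $2|\mathcal{B}_t|C$ rather than $C$.

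To close this gap I would try the following, in order:
\begin{enumerate}
\item State the theorem conditionally on $w_t$ being fixed before sampling (data-independent), which is what the accounting really requires.
\item Modify the algorithm so that $w_t$ is computed from public or shadow data, or from $\hat{G}_{t-1}$ and $W_t$ (both already-released quantities). Then $w_t$ is $\mathcal{F}_{t-1}$-measurable and Steps 1--3 go through unchanged by post-processing.
\end{enumerate}
If neither is acceptable, one would have to privatize the $L$ statistics $\hat{w}_t^{(l)*}$ separately and pay an extra composition term. In that case the claim of \emph{identical} $(\varepsilon,\delta)$ would not hold as stated.
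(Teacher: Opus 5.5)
Your Steps 1--3 are the paper's proof: bound $\|\hat G_{t,i}\|_2\le C_{t,i}\le C$, treat each iteration as the sampled Gaussian mechanism with sensitivity $C$, and reuse the moments accountant unchanged. Your Step 1 inequality is the accurate version; the paper's equality $\|\hat G_{t,i}\|_2=C_{t,i}$ fails when some $g^{(l)}_{t,i}=0$, though only the inequality is needed.

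The obstacle you raise is real, and it is exactly the step the paper's proof gets wrong. The paper says the $w_t^{(l)}$ are ``derived solely from the gradients themselves'' and from public ER estimates, and therefore cost nothing. But the gradients $g^{(l)}_{t,i}$, $\mathbf{x}_i\in\mathcal{B}_t$, are functions of the private examples, so this is not post-processing. The argument is valid only when $w_t$ is fixed by public data and previously released iterates before $\mathcal{B}_t$ is used, i.e.\ under your repairs (1) or (2).

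You can make your conclusion unconditional, because the problem is not an artifact of a crude bound. For a clipped example, $\|g^{(l)}_{t,i}\|_2/C_{t,i}=\|g^{(l)}_{t,i}\|_2/C$ is unbounded, so one example can rotate $w_t$ to almost any unit vector. Here is a concrete case:
\begin{itemize}
\item Let $n$ batch examples have gradient $Cu$ in layer $1$ and zero elsewhere. Then $w_t=e_1$ and the pre-noise sum is $nCu$.
\item Add $\mathbf{x}^*$ with gradient $Mv$ in layer $2$ only. Then $\hat w^{(1)*}_t=n/(n+1)$ and $\hat w^{(2)*}_t=M/((n+1)C)$.
\item For any positive ER values, $w_t\to e_2$ as $M\to\infty$. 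The $n$ old terms collapse to $nCw^{(1)}_t u\to 0$, and the sum moves by $C\sqrt{n^2+1}$ in the limit.
\end{itemize}
That is a mean shift of roughly $n/\sigma$ noise standard deviations. The admissible $\sigma$ in the statement does not depend on $N$, so taking $n\approx qN$ large makes $\mathbf{x}^*$ detectable essentially whenever it is sampled. The guarantee then fails for every $\delta<q$. So the theorem, as stated for Algorithm 1, is false, not merely unproven.

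Two smaller points:
\begin{itemize}
\item Your fallback of privatizing the statistics $\hat w^{(l)*}_t$ cannot be done as written. The same computation shows they have unbounded sensitivity, so they would first have to be bounded (e.g.\ by using $\|g^{(l)}_{t,i}\|_2/\|G_{t,i}\|_2\in[0,1]$), which changes the algorithm.
\item Algorithm 1 divides by the realized $|\mathcal{B}_t|$, which is not part of the sampled-Gaussian output. For your Step 3 post-processing claim to hold literally, normalize by the fixed $qN$, as Abadi et al.\ do with their lot size.
\end{itemize}
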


\subsection{Convergence Properties}

The convergence of LM-DP-SGD for non-convex objectives, a common assumption in deep learning, is established in Theorem~\ref{thm:convergence}, with the proof provided in Appendix~\ref{proof of convergence}.

\begin{theorem}[Convergence of LM-DP-SGD]\label{thm:convergence}
Let $F: \mathbb{R}^d \to \mathbb{R}$ be an $L$-smooth differentiable objective function with bounded expected true gradient, i.e., $\|\nabla F(W)\|_2 \le M$ for all $W$. Consider Algorithm~\ref{alg:LR-DP-SGD} executed for $T$ iterations from initialization $W_0$ on a dataset $D$ of size $N$, using a fixed learning rate $\eta_t = \eta/\sqrt{T}$ for some constant $\eta > 0$ (e.g., $\eta = 1/L$), Poisson sampling probability $q$, clipping threshold $C$, and a layer-wise reweighting vector $w_t=[w_t^{(1)},...,w_t^{(L)}]$ satisfying $\sum_l (w_t^{(l)})^2=1$. Define the mini-batch averaged, layer-wise reweighted-and-clipped gradient as $\bar{G}_t=\frac{1}{|\mathcal{B}_t|}\sum_{\mathbf{x}_i \in \mathcal{B}_t}\hat{G}_{t,i}$ with expectation $\mathbb{E}[\bar{G}_t]=\mathbb{E}[\hat{G}_{t,i}]$. Assume that the biases introduced by layer-wise reweighted clipping and mini-batch sampling are uniformly bounded, i.e., for all $t=\{0, \dots, T-1\}$, $\|\mathbb{E}[\bar{G}_t] - \nabla F(W_t)\|_2 = \|b_t\|_2 \le \xi$, and $\|\bar{G}_t-\mathbb{E}[\bar{G}_t]\|_2 \le V$. Let $\sigma$ denote the noise multiplier required for $(\varepsilon, \delta)$-differential privacy (cf. Eq.(\ref{eq:sigma_bound})). Then, the time-averaged squared $\ell_2$-norm of the expected true gradient admits:
\begin{align}
\begin{split}
    \frac{1}{T} \sum_{t=0}^{T-1}\|\nabla F(W_t)\|_2^2  \le O\left( \frac{L(F(W_0) - F^*)}{\sqrt{T}} \right) + O(M \xi) \\ +O\left( \frac{(M+\xi)^2 + V^2+ \frac{C^2\sigma^2 d}{q^2N^2}}{\sqrt{T}} \right),
\end{split}
\end{align}
where $d$ denotes the dimensionality of $W$. The expected objective function is defined as $F(W)=\mathbb{E}_{\mathbf{x}_i \sim D}[F(\mathbf{x}_i;W)]$ and $F^*$ is a lower bound on the attainable objective value under $W_T$ such that $F(W_T) \ge F^*$. The term $V^2$, induced by mini-batch sampling, is typically negligible in practice.
\end{theorem}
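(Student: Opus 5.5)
The plan is the standard descent-lemma argument for non-convex SGD, with the update direction split into the true gradient, a clipping bias, a sampling deviation, and Gaussian noise. Write the update of Algorithm~\ref{alg:LR-DP-SGD} as $W_{t+1}=W_t-\eta_t \hat{G}_t$ with
\begin{align*}
\hat{G}_t=\bar{G}_t+\frac{1}{|\mathcal{B}_t|}Z_t=\nabla F(W_t)+b_t+V_t+\frac{1}{|\mathcal{B}_t|}Z_t,\qquad Z_t\sim\mathcal{N}(0,C^2\sigma^2 I).
\end{align*}
Since $\mathbb{E}|\mathcal{B}_t|=qN$, I would treat $1/|\mathcal{B}_t|$ as $1/(qN)$, either by conditioning on a batch of expected size or by absorbing the fluctuation into constants. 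This gives the noise second moment $\mathbb{E}\|Z_t\|_2^2/(qN)^2=C^2\sigma^2 d/(q^2N^2)$.

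\textbf{Step 1 (one-step descent).} By $L$-smoothness,
\begin{align*}
F(W_{t+1})\le F(W_t)-\eta_t\langle\nabla F(W_t),\hat{G}_t\rangle+\frac{L\eta_t^2}{2}\|\hat{G}_t\|_2^2 .
\end{align*}
Take expectation conditional on $W_t$. The noise is zero-mean and independent of $W_t$, and $V_t$ has zero conditional mean by definition of $\mathbb{E}[\bar{G}_t]$, so both cross terms vanish. The inner product then reduces to $\|\nabla F(W_t)\|_2^2+\langle\nabla F(W_t),b_t\rangle\ge\|\nabla F(W_t)\|_2^2-M\xi$ by Cauchy--Schwarz together with $\|\nabla F\|_2\le M$ and $\|b_t\|_2\le\xi$.

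\textbf{Step 2 (second moment).} Bound
\begin{align*}
\mathbb{E}\|\hat{G}_t\|_2^2\le 2\|\nabla F(W_t)+b_t\|_2^2+2\,\mathbb{E}\|V_t\|_2^2+\frac{C^2\sigma^2 d}{q^2N^2},
\end{align*}
using independence of the noise from the other terms. With $\|\nabla F(W_t)+b_t\|_2\le M+\xi$ and $\|V_t\|_2\le V$, this is at most $2(M+\xi)^2+2V^2+C^2\sigma^2d/(q^2N^2)$.

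\textbf{Step 3 (telescoping).} Sum over $t=0,\dots,T-1$ with $\eta_t=\eta/\sqrt{T}$. Use $F(W_T)\ge F^*$, then divide by $T\eta_t=\eta\sqrt{T}$. This gives
\begin{align*}
\frac1T\sum_t\|\nabla F(W_t)\|_2^2\le\frac{F(W_0)-F^*}{\eta\sqrt{T}}+M\xi+\frac{L\eta}{2\sqrt{T}}\Big(2(M+\xi)^2+2V^2+\tfrac{C^2\sigma^2d}{q^2N^2}\Big).
\end{align*}
Setting $\eta=1/L$ produces $L(F(W_0)-F^*)/\sqrt{T}$ in the first term and a constant in front of the last term, which matches the claimed $O(\cdot)$ form. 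As in the statement, the left-hand side is read in expectation over the sampling and noise.

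\textbf{Main obstacle.} The delicate point is the random batch size under Poisson sampling. The noise is divided by $|\mathcal{B}_t|$, which can be small, so $\mathbb{E}[1/|\mathcal{B}_t|^2]$ is not exactly $1/(qN)^2$. I would first try conditioning on $|\mathcal{B}_t|\ge qN/2$, a high-probability event by a Chernoff bound, which costs only a constant factor; alternatively I would adopt the normalization by $qN$ used in standard DP-SGD analyses. A secondary subtlety is that $w_t$ depends on the batch, so $\mathbb{E}[\bar{G}_t]$ must be understood as the conditional expectation given $W_t$. The assumed uniform bounds $\xi$ and $V$ are stated exactly for this quantity, so Step 1 goes through unchanged.
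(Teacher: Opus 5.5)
Your proposal is correct and follows essentially the same route as the paper's proof: the $L$-smoothness descent inequality, the Cauchy--Schwarz bound $\langle \nabla F(W_t), b_t\rangle \ge -M\xi$, a second-moment bound on $\hat{G}_t$, and telescoping with $\eta_t=\eta/\sqrt{T}$ and $\eta=\Theta(1/L)$. The differences are cosmetic. The paper uses the exact split $\mathbb{E}\|\hat{G}_t\|_2^2=\|\nabla F(W_t)+b_t\|_2^2+\mathrm{Var}(\hat{G}_t)$ where you use a factor-$2$ inequality, which affects only constants. The paper also simply substitutes $|\mathcal{B}_t|\approx qN$ without addressing the random Poisson batch size that you flag.
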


\textit{Analysis.} LM-DP-SGD achieves a convergence rate comparable to standard DP-SGD, converging to a neighborhood around a stationary point with radius $O(M\xi)$. Here, $M$ upper-bounds $\|\nabla F(W)\|_2$, determined by intrinsic factors such as the model architecture, loss function, training data, and applied constraints or regularization. We thus focus on $\xi = \sup_t \|b_t\|_2$, which captures the bias introduced by layer-wise reweighted clipping. Specifically, this bias quantifies the discrepancy between the expected layer-wise reweighted-and-clipped gradient $\mathbb{E}[\bar{G}_t]=\left[\mathbb{E}[\bar{G}_t^{(1)}],...,\mathbb{E}[\bar{G}_t^{(L)}]\right]$ and the expected true gradient $\nabla F(W_t)=[\nabla F^{(1)}(W_t),...,\nabla F^{(L)}(W_t)]$. For each layer $l$, $\mathbb{E}[\bar{G}_t^{(l)}]=\mathbb{E}_{\mathbf{x}_i\sim D}\left[ C_{t,i} w_t^{(l)} \frac{g_{t,i}^{(l)}}{\|g_{t,i}^{(l)}\|_2} \right] = w_t^{(l)} u_t$, where $u_t^{(l)} = \mathbb{E}_{\mathbf{x}_i\sim D}\left[ C_{t,i}\frac{g_{t,i}^{(l)}}{\|g_{t,i}^{(l)}\|_2} \right]$. The resulting bias is $b_t = [w_t^{(1)} u_t^{(1)} - \nabla F^{(1)}(W_t), \dots, w_t^{(L)} u_t^{(L)} - \nabla F^{(L)}(W_t)]$. In practice, as mini-batch sampling bias is typically negligible, expectations can be approximated by mini-batch averages to reduce computational cost: $\nabla F^{(l)}(W_t) \simeq \frac{1}{|\mathcal{B}_t|}\sum_{\mathbf{x}_i\in \mathcal{B}_t} g_{t,i}^{(l)}$, $u_t^{(l)}\simeq \frac{1}{|\mathcal{B}_t|}\sum_{\mathbf{x}_i\in \mathcal{B}_t}\left[C_{t,i}\frac{g_{t,i}^{(l)}}{\|g_{t,i}^{(l)}\|_2}\right]$.

Table~\ref{tab:bias_comparison} presents a conceptual comparison of the origins of the bias term $b_t$, whose $\ell_2$-norm is upper bounded by $\xi$, across LM-DP-SGD and other baselines. In deep learning, the non-convex and data-dependent nature of optimization, combined with the intricate interactions of privacy mechanisms, makes the theoretical characterization of $b_t$ challenging; consequently, prior work has largely relied on empirical analysis \cite{yang2022normalized, bu2023automatic, xia2023differentially}. DP-SGD \cite{abadi2016deep}, which employs a fixed clipping threshold $C$, introduces bias only through gradient clipping—often omitted or separately bounded—and empirically exhibits the smallest bias. Adaptive methods such as Auto-S/NSGD \cite{yang2022normalized, bu2023automatic} and DP-PSAC \cite{xia2023differentially} incur additional bias due to adaptive clipping or normalization, which is empirically larger than that of DP-SGD but remains of the same order. LM-DP-SGD introduces bias through layer-wise reweighted clipping, arising both from the reweighting of per-layer contributions to the global gradient norm (an additional source of bias) and from the clipping of these norms. We empirically evaluate and compare the magnitudes of these biases in Section~\ref{Magnitude of Gradient Bias}, and provide a detailed analysis of the layer-wise reweighted clipping bias in LM-DP-SGD in Appendix~\ref{bias}.
\begin{table}[ht]
\centering
\caption{Conceptual comparison of gradient norm constraint mechanisms across DP-SGD methods. $b_t$ captures the expected deviation between the constrained and true gradients.}
\resizebox{0.45\textwidth}{!}{
\begin{tabular}{cc}
\toprule
DP-SGD Method      & Gradient Norm Constraint Mechanism    \\ \midrule
DP-SGD      & Fixed clipping\\
Auto-S/NSGD & Adaptive clipping/Normalization \\
DP-PSAC     & Adaptive clipping/Normalization \\
LM-DP-SGD (Ours)  & Layer-wise reweighted clipping  \\ \bottomrule
\end{tabular}
}
\label{tab:bias_comparison}
\end{table}

\begin{table*}[ht]
\caption{Hyperparameter settings. Specifically, for MNIST, we follow the setup in \cite{shamsabadi2023losing}, employing a 6-layer Shallow-CNN (S-CNN). For CIFAR10, we adopt the configuration from \cite{papernot2021tempered}, using a 12-layer Deep-CNN (D-CNN).}
  \centering
  \resizebox{1\textwidth}{!}{
  \begin{tabular}{cccccccc}
    \toprule
    Training  & Target &Shadow &Privacy  & Sampling & Learning& Clipping & Layer-wise MIA-Risk  Heterogeneity \\ 
    Dataset $D$&Model $F$& Dataset $D_{\text{shadow}}$ &Budget $(\varepsilon,\delta)$&Probability $q$&Rate $\eta_t$&Threshold $C$ & Emphasis Factor $r$\\
    \midrule
    MNIST & S-CNN & FashionMNIST & $(5.0, 1\times10^{-5})$ &0.01&0.08 &1.0& 5.0\\
    CIFAR10 & D-CNN & ImageNet &$(8.0, 1\times10^{-5})$ &0.01&0.10    &2.0 &  3.0\\
    CIFAR100& ResNet-18  &  ImageNet  & $(8.0, 1\times10^{-5})$ &0.01&0.10 &3.0 & 2.0 \\
    CelebA & VGG-16 & FFHQ & $(8.0, 1\times10^{-5})$ &0.01&0.08 &3.0& 2.0\\
    \bottomrule
  \end{tabular}}
  \label{hyperparameter}
\end{table*}

\begin{table*}[ht]
\caption{Evaluation of MIA accuracy on partial intermediate convolutional-layer representations. The maximum MIA accuracy across layers indicates the maximum IR-level MIA risk, reflecting the overall vulnerability to MIAs at the IR level. \textbf{Bolded values} highlight the lowest peak MIA accuracy across layers among all methods, identifying the most effective method for mitigating IR-level MIA risk.}
    \centering
  \resizebox{1\textwidth}{!}{
    \begin{tabular}{cccccc}
    \toprule
      Model & DP-SGD  & Index of Intermediate & MIA Accuracy & MIA Accuracy  &Maximum MIA Accuracy across \\
      (Dataset) & Method & Convolutional-Layer & under SGD (\%) & under DP-SGD (\%)&  Layers under DP-SGD (\%) \\
    \midrule
         & DP-SGD & \multirow{4}{*}{1, 2}& \multirow{4}{*}{72.5, 75.0} & 67.2, 70.9 & 70.9\\
    S-CNN & Auto-S/NSGD &  &  &   66.6, 70.0&70.0       \\
(MNIST) & DP-PSAC  &  &  &   67.7, 71.8      &71.8\\
          & LM-DP-SGD (Ours)  & &  & 68.3, 69.2    &\textbf{69.2}  \\
    \hline
        & DP-SGD & \multirow{4}{*}{1, 3, 5, 7, 8}& \multirow{4}{*}{59.6, 64.9, 68.3, 71.6, 73.1} & 57.0, 63.2, 65.5, 67.3, 69.8 & 69.8\\
    D-CNN & Auto-S/NSGD &  &  &    58.2, 63.1, 66.4, 68.2, 70.7  &70.7  \\
(CIFAR10) & DP-PSAC  &  &  &    57.7, 62.8, 65.1, 68.0, 69.4  &69.4  \\
          & LM-DP-SGD (Ours)  & &  &   59.0, 64.3, 65.2, 66.5, 67.9& \textbf{67.9}   \\
    \hline
        & DP-SGD & \multirow{4}{*}{1, 5, 10, 15, 17}& \multirow{4}{*}{52.1, 57.2, 61.5, 64.8, 66.1} & 49.7, 54.5, 59.8, 62.7, 64.7&64.7\\
    ResNet-18 & Auto-S/NSGD &  &  &    49.9, 54.7, 59.0, 61.9, 63.9   &63.9 \\
(CIFAR100) & DP-PSAC  &  &  &   49.0, 53.8, 58.1, 61.0, 63.1  &63.1   \\
          & LM-DP-SGD (Ours)  & &  &  52.5, 55.9, 59.3, 60.2, 61.8 &\textbf{61.8}  \\\hline
               & DP-SGD & \multirow{4}{*}{1, 5, 10, 12, 13}& \multirow{4}{*}{63.3, 67.1, 70.9, 71.7, 72.4} & 57.1, 61.6, 65.2, 67.9, 68.5& 68.5 \\
    VGG-16 & Auto-S/NSGD &  &  &  57.3, 62.1, 66.4, 68.3, 69.9  &69.9    \\
(CelebA) & DP-PSAC  &  &  &   56.8, 61.2, 64.9, 66.7, 68.2  & 68.2   \\
          & LM-DP-SGD (Ours)  & &  &    60.6, 63.4, 64.1, 64.5, 65.6& \textbf{65.6} \\
    \bottomrule
    \end{tabular}}
    \label{noisy_baseline_table}
\end{table*}

\section{Experimental Setup}

We comprehensively evaluate LM-DP-SGD against standard DP-SGD \cite{abadi2016deep}, Auto-S/NSGD \cite{yang2022normalized, bu2023automatic}, and DP-PSAC \cite{xia2023differentially} on four real-world datasets: MNIST \cite{lecun1998gradient}, CIFAR10 \cite{krizhevsky2009learning}, CIFAR100\cite{krizhevsky2009learning}, and CelebA \cite{celeba}. Public shadow datasets approximate the private data distribution, with ablations in Appendix \ref{shadow_datset} demonstrating robustness to distributional mismatch. Hyperparameters are summarized in Table~\ref{hyperparameter}. All experiments are conducted on a server with two Intel(R) Xeon(R) Silver 4310 CPUs @2.10GHz and one NVIDIA A100 GPU, running Ubuntu 22.04 with CUDA Toolkit 12.4.

\section{Experimental Results}

Our evaluation focuses on: (i) resilience to MIAs \cite{nasr2019comprehensive} targeting IRs, quantified by the maximum MIA accuracy across all layers; (ii) model utility, measured by test accuracy; (iii) convergence behavior, characterized by the evolution of the $\ell_2$-norm of the gradient bias induced by layer-wise reweighted clipping (clipping in DP-SGD, adaptive clipping/normalization in Auto-S/NSGD and DP-PSAC) during training. Ablations are detailed in Appendix~\ref{ablation}.

\subsection{Enhanced Resilience against MIAs on IRs}\label{resilience}

This section demonstrates that LM-DP-SGD substantially improves resilience against MIAs targeting intermediate-layer representations. As reported in Table \ref{noisy_baseline_table}, LM-DP-SGD adaptively allocates privacy protection across layers, applying stronger safeguards to empirically more vulnerable deeper layers while incurring limited privacy loss in less vulnerable shallow layers. This layer-wise MIA-risk-aware protection allocation effectively reduces the maximum MIA accuracy observed across layers, achieving the lowest peak IR-level MIA risk among all baselines. The observed privacy gains arise from LM-DP-SGD’s layer-wise reweighted clipping: by down-weighting the norm contributions of empirically susceptible layers, the method amplifies the effectiveness of fixed-magnitude noise on those layers, thereby providing stronger and more targeted privacy protection.

\begin{figure*}[ht]
  \centering
  \subfigure[S-CNN (MNIST)]{
		\includegraphics[width=0.40\textwidth]{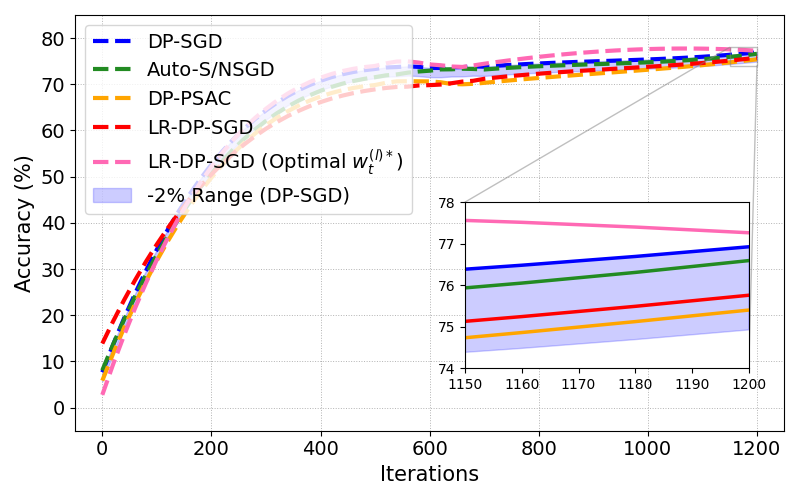}
    }
  \subfigure[D-CNN (CIFAR10)]{
		\includegraphics[width=0.40\textwidth]{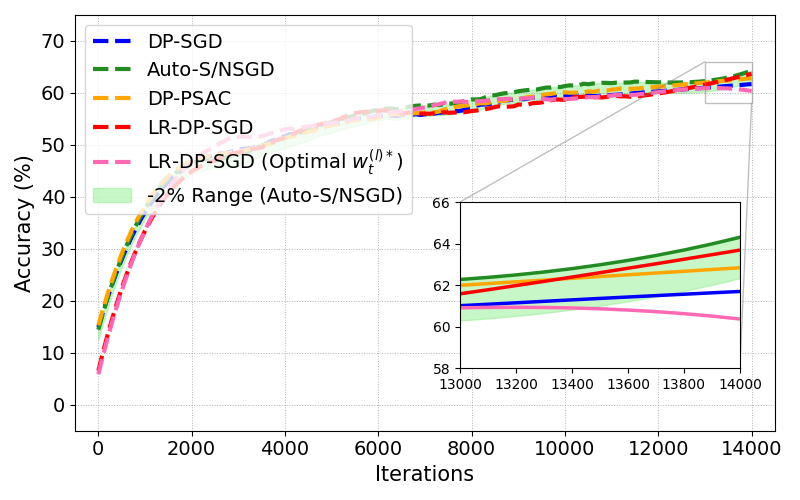}
		\label{acc_10}
    }
  \subfigure[ResNet-18 (CIFAR100)]{
		\includegraphics[width=0.40\textwidth]{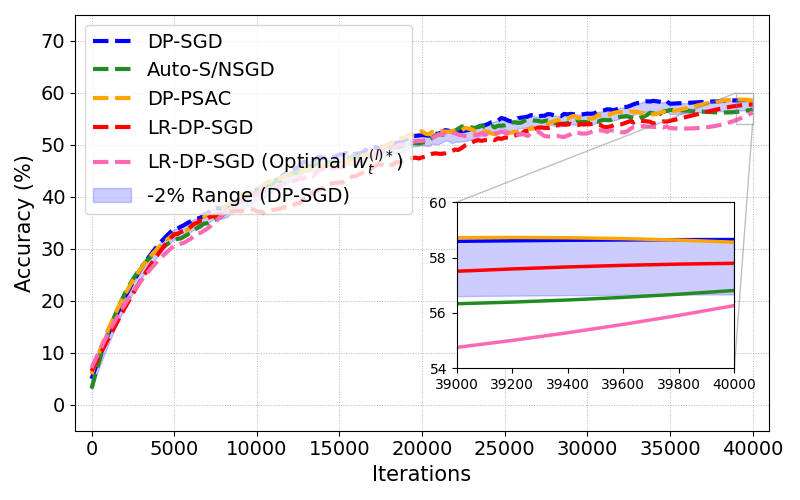}
		\label{acc_100}
    }
\subfigure[VGG-16 (CelebA)]{
		\includegraphics[width=0.40\textwidth]{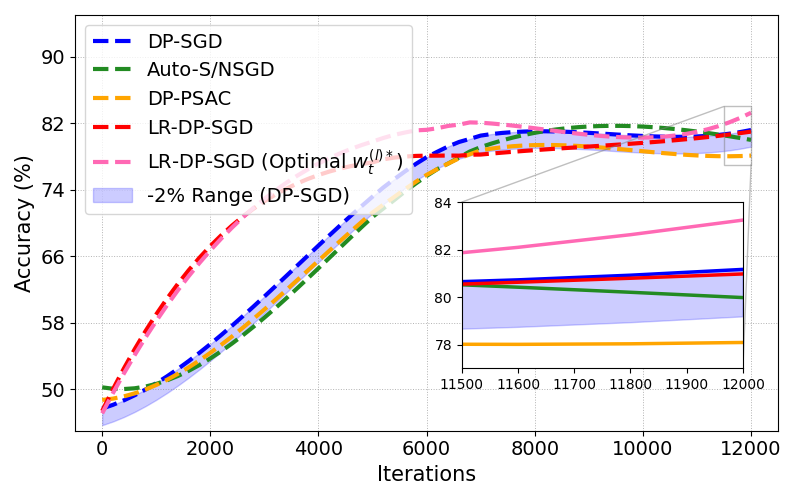}
		\label{acc_fmnist}
    }
  \caption{Evolution of test accuracy during training. The optimal layer-wise reweighting coefficients for convergence, $\{w_t^{(l)*}\}_{l=1}^L$, are derived via Lagrange multipliers (see Appendix \ref{appendix lag}). For clarity, all curves are smoothed using a Savitzky-Golay filter. The shaded regions denote performance within $2\%$ below the test accuracy of the baseline which achieves the maximum final accuracy. The results show that our method preserves model utility relative to the baselines. Importantly, this does not imply superior test accuracy; rather, performance remains comparable. This conclusion is supported by two observations. First, due to curve smoothing, minor deviations should be interpreted as smoothing artifacts rather than as meaningful performance differences; Second, once training converges, LM-DP-SGD’s performance consistently lies within the shaded regions. Overall, these results indicate that our method preserves utility.}
  \label{fig:test_accuracy_all_datasets}
\end{figure*}

\begin{figure*}[ht]
  \centering
  \subfigure[S-CNN (MNIST)]{
		\includegraphics[width=0.40\textwidth]{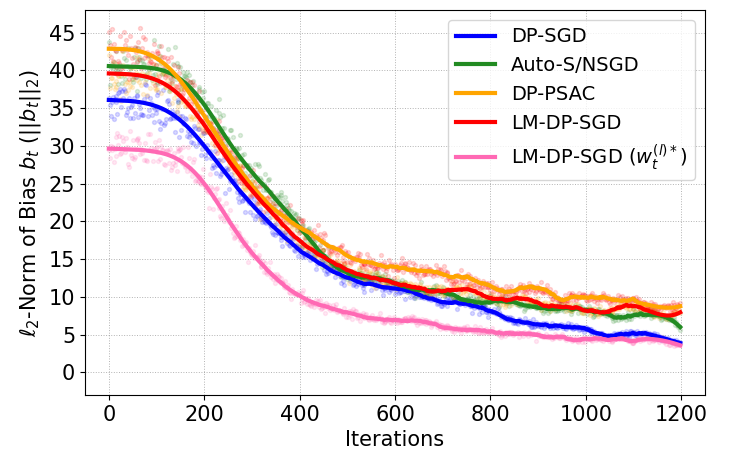}
    }
  \subfigure[D-CNN (CIFAR10)]{
		\includegraphics[width=0.40\textwidth]{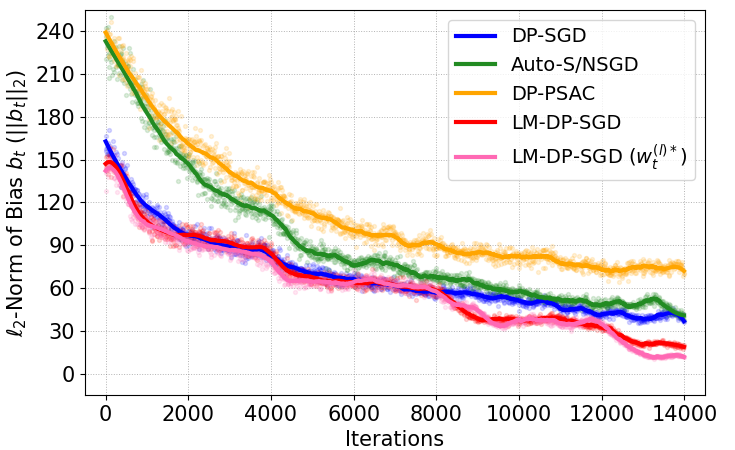}
    }
  \subfigure[ResNet-18 (CIFAR100)]{
		\includegraphics[width=0.40\textwidth]{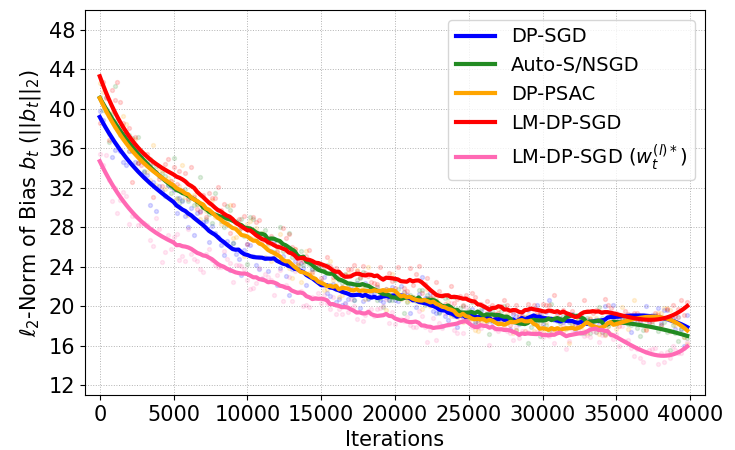}
    }
    \subfigure[VGG-16 (CelebA)]{
		\includegraphics[width=0.40\textwidth]{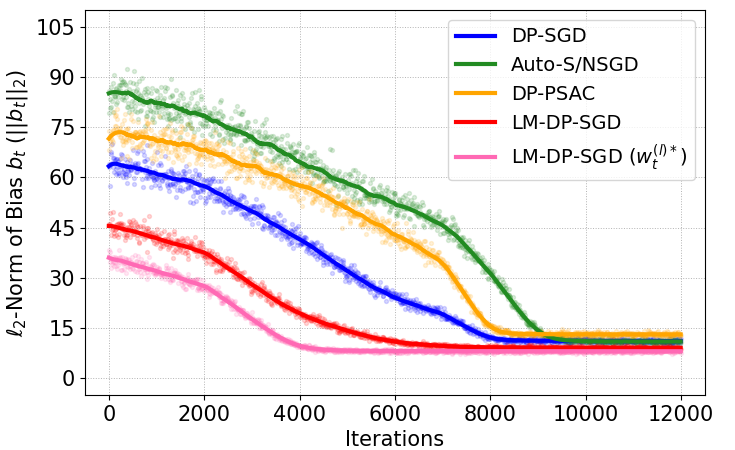}
    }
  \caption{Evolution of the $\ell_2$-norm of the bias term $b_t$, $\|b_t\|_2$. For visual clarity, all curves are smoothed using a Savitzky-Golay filter. We observe that the magnitude of $\|b_t\|_2$ remains within the same order across all methods throughout training. Compared to the baselines, our method does not introduce excessive bias; in fact, it yields a relatively lower gradient bias over the majority of training.}
  \label{fig:bias_norm_all_datasets}
\end{figure*}

Crucially, reducing the peak IR-level MIA risk directly limits the adversary’s maximal success probability of inferring membership from IRs without increasing the privacy budget. Since overall privacy leakage is ultimately dominated by the most vulnerable layer, suppressing the maximum layer-wise MIA accuracy prevents any single layer from becoming the bottleneck. Consequently, while shallow layers remain comparatively less sensitive, strengthened protection of deeper layers mitigates catastrophic worst-case leakage and improves end-to-end privacy robustness.

\subsection{Preserved Model Utility}\label{utility}

As depicted in Figure~\ref{fig:test_accuracy_all_datasets}: (i) LM-DP-SGD consistently achieves competitive—and in some cases superior—model utility, as measured by test accuracy, relative to all baseline methods. Since all curves are smoothed, minor deviations and fluctuations should be interpreted as the noise introduced by smoothing rather than as meaningful differences in model utility. To facilitate comparison, we highlight a shaded region corresponding to performance within 2\% below the maximum final test accuracy attained by any baseline, methods whose curves fall within this region are considered to have equivalent utility. We observe that LM-DP-SGD is consistently encompassed by the shaded region and is never the worst-performing method among the baselines. These results indicate that LM-DP-SGD effectively preserves model utility. When combined with the results in Section \ref{resilience}, which demonstrate LM-DP-SGD’s superior resilience against MIAs targeting IRs, these findings provide strong evidence that LM-DP-SGD achieves an improved privacy–utility trade-off; (ii) The final utilities attained by different DP-SGD approaches are highly comparable, suggesting that diverse gradient norm constraint mechanisms can effectively optimize model parameters and achieve similar levels of utility.

Furthermore, we analyze an LM-DP-SGD variant in which the layer-wise reweighting coefficients $\{w_t^{(l)*}\}_{l=1}^L$ are optimized solely for convergence via Lagrange multipliers (see Appendix~\ref{appendix lag}). Across all evaluated datasets, this variant converges faster and attains slightly higher final utility than the best-performing baselines. The reason is straightforward: baselines with global per-example clipping can be viewed as layer-wise DP-SGD without explicit layer-wise reweighting, where the corresponding coefficients are implicitly determined by each layer’s gradient norm. In contrast, explicit layer-wise reweighting in LM-DP-SGD ($\{w_t^{(l)*}\}_{l=1}^L$) aligns updates more closely with the true (unclipped) gradient, leading to more effective optimization, faster convergence, and often improved final performance.

\subsection{Training Convergence Behavior}\label{Magnitude of Gradient Bias}

To further analyze training convergence behavior, we examine the evolution of the $\ell_2$-norm of the bias term $b_t$, $\Vert b_t \Vert_2$, as illustrated in Figure~\ref{fig:bias_norm_all_datasets}. We highlight four key observations: (i) Across all evaluated datasets, the LM-DP-SGD variant with $\{w_t^{(l)*}\}_{l=1}^L$ induces the smallest bias, as its layer-wise reweighted and clipped gradients are more closely aligned with the true gradients; (ii) Among the baselines, standard DP-SGD exhibits the smallest bias, whereas adaptive clipping approaches (Auto-S/NSGD, DP-PSAC) generally incur higher bias, albeit of the same order of magnitude. Notably, despite being explicitly designed to balance heterogeneous layer-wise MIA risk with optimization dynamics, LM-DP-SGD achieves bias comparable to, or even smaller than, that of standard DP-SGD. On CelebA, in particular, LM-DP-SGD significantly outperforms standard DP-SGD in terms of gradient bias reduction; (iii) As training progresses, $\Vert b_t \Vert_2$ decreases monotonically, which is expected since convergence toward better optima yields smaller gradient norms and thus reduces the absolute distortion introduced by gradient norm constraints; (iv) Smaller $\Vert b_t \Vert_2$ indicates updates that are better aligned with the true gradients, leading to faster convergence. On CelebA, LM-DP-SGD and its variant with $\{w_t^{(l)*}\}_{l=1}^L$ achieve both the smallest bias and markedly faster convergence than competing baselines, consistent with the test accuracy trends in Figure~\ref{fig:test_accuracy_all_datasets}.

Overall, across different gradient norm constraint mechanisms, $\Vert b_t \Vert_2$ remains within the same order of magnitude. This indicates that the specific constraint design has only a marginal impact on training dynamics, underscoring the robustness of our layer-wise reweighted clipping strategy in meeting privacy guarantees and preserving model utility.

Additional details on running time and computational resources are provided in Appendix \ref{runningtime}.

\section{Conclusion}

This work investigates the empirically validated heterogeneity in vulnerability to Membership Inference Attacks (MIAs) across network layers, where Intermediate Representations (IRs) from deeper layers are typically more susceptible. To address the limitation of existing DP-SGD methods—which injects uniform noise into the globally clipped gradients and ignores layer-wise MIA-risk heterogeneity—we propose Layer-wise MIA-risk-aware DP-SGD (LM-DP-SGD). LM-DP-SGD adaptively calibrates the privacy protection strength of each layer based on its estimated MIA risk by performing layer-wise reweighted gradient clipping before noise injection. We also provide rigorous theoretical guarantees on both the privacy and convergence properties of LM-DP-SGD. Extensive experiments demonstrate that LM-DP-SGD markedly reduces peak MIA risks on IRs without increasing the overall privacy budget or degrading model utility, showcasing an improved privacy-utility trade-off.

\bibliography{example_paper}
\bibliographystyle{icml2026}

\newpage
\appendix
\onecolumn
\section{Notations}\label{appendix notation}
For clarity and consistency, Table \ref{tab:commands} provides a summary of key symbols and their corresponding descriptions.
\begin{table}[ht]
  \caption{Key symbols and descriptions used in LM-DP-SGD.}
  \label{tab:commands}
  \begin{tabular}{p{0.21\textwidth} p{0.75\textwidth}}
    \toprule
    Symbol & Description\\
    \midrule
    $F(\mathbf{x};W)$ & Target model with input $\mathbf{x}$ and parameters $W$. \\
    $F_{\text{shadow}}(\mathbf{x};W)$ & Shadow model with input $\mathbf{x}$ and parameters $W$.\\
    $I^{(l)}(\mathbf{x)}$& Intermediate representation of input $\mathbf{x}$ at layer $l$ of the trained shadow model.\\
    $F_{\text{adv}}^{(l)}(I^{(l)}(\mathbf{x});W^{(l)})$ & Adversarial model targeting layer-$l$ representations for membership inference attacks.\\
    $D$ & Private training set with $N$ examples.\\
    $D_{\text{shadow}}$ & Shadow dataset with $N_{\text{s}}$ examples.\\
    $\{\text{ER}^{(l)}\}_{l=1}^L$ & Per-layer MIA risks estimated by trained layer-specific adversaries, qualified by error rates.\\
    $G_{t,i}=[g_{t,i}^{(1)},...,g_{t,i}^{(L)}]$ & Gradient of the $i$-th example in $\mathcal{B}_t$: $G_{t,i}=\nabla F(\mathbf{x}_i,W_t)$, with $g_{t,i}^{(l)}$ the layer-$l$ component.\\
    $C_{t,i}=\min(C, \|G_{t,i}\|_2)$& $\ell_2$-norm of the clipped gradient of example $\mathbf{x}_i$ at iteration $t$ of standard DP-SGD.\\
    $\tilde{G}=[\tilde{g}^{(1)},...,\tilde{g}^{(L)}]$ & Resulting gradient obtained by applying clipping to $G$ in standard DP-SGD, with $\tilde{g}^{(l)}$ denoting the layer-$l$ component.\\
    $w_t=[w_t^{(1)},...,w_t^{(L)}]$ & Layer-wise reweighting vector at iteration $t$ in LM-DP-SGD, with $w_t^{(l)}$ for layer $l$.\\
    $\hat{G}=[\hat{g}^{(1)},...,\hat{g}^{(L)}]$ & Resulting gradient obtained by applying layer-wise reweighted clipping to $G$ in LM-DP-SGD, with $\hat{g}^{(l)}$ denoting the layer-$l$ component. \\
    $\bar{G}_t=\frac{1}{|\mathcal{B}_t|}\sum_{\mathbf{x}_i\in \mathcal{B}_t}[\hat{G}_{t,i}]$& Mini-batch-averaged layer-wise reweighted-and-clipped gradient at iteration $t$.\\
    $\nabla F(W_t)=\mathbb{E}_{\mathbf{x}_i \sim D}[G_{t,i}]$ & Expected true gradient at iteration $t$, which can be approximated by the mini-batch average in practice: $\frac{1}{|\mathcal{B}_t|} \sum_{\mathbf{x}_i \in \mathcal{B}_t} G_{t,i}$.\\
    \bottomrule
  \end{tabular}
\end{table}

\section{Layer-wise MIA-Risk Estimation Algorithm}\label{estimation}
The layer-wise estimation of MIA risk is conducted via the algorithmic procedure outlined in Algorithm~\ref{alg:estimation}, which is performed prior to differentially private training.

\begin{algorithm}[!ht]
\caption{Layer-wise MIA-Risk Estimation.}
{\raggedright
\textbf{Inputs}: shadow model $F_{\text{shadow}}(\mathbf{x};W)$ with $L$ layers and randomly initialized $W$, shadow dataset $D_{\text{shadow}}$, layer-specific MIA adversaries $\{F_{\text{adv}}^{(l)}(I^{(l)}(\mathbf{x}),W^{(l)})\}_{l=1}^L$ with randomly initialized $\{W^{(l)}\}_{l=1}^L$.\\
\textbf{Parameters}: \textit{During shadow model training—}train-test split ratio $r$, training epoch $E_{\text{shadow}}$, batch size $B_{\text{shadow}}$, learning rate $\eta_{\text{shadow}}$, and loss function $\ell_\text{s}$; \textit{During adversaries training—}training epochs $\{E_{\text{adv}}^{(l)}\}_{l=1}^L$, batch sizes $\{B_{\text{adv}}^{(l)}\}_{l=1}^L$, learning rates $\{\eta_{\text{adv}}^{(l)}\}_{l=1}^L$, and loss function $\ell_\text{a}$.\\
}
\vspace{0ex}
\begin{algorithmic}[1] 
\STATE Split $D_{\text{shadow}}$ into disjoint $D_{\text{shadow}}^{\text{train}}=\{(\mathbf{x}_{n}^{'\text{in}'},\mathbf{y}_n^{'\text{in}'})\}_{n=1}^{N_{\text{in}}}$ and $D_{\text{shadow}}^{\text{test}}=\{(\mathbf{x}_{n}^{'\text{out}'},\mathbf{y}_n^{'\text{out}'})\}_{n=1}^{N_{\text{out}}}$ with $N_{\text{in}} = \lfloor r \cdot N_{\text{s}} \rfloor$.\\
\FOR{Epoch $e=0$ to $E_{\text{shadow}}-1$}
    \FOR{Each mini-batch $\mathcal{B} \subset D_{\text{shadow}}^{\text{train}}$ with $|\mathcal{B}|=B_{\text{shadow}}$}
        \STATE Calculate $\hat{\mathbf{y}}_j = F_{\text{shadow}}(\mathbf{x}_j; W)$ for $(\mathbf{x}_j,\mathbf{y}_j)\in\mathcal{B}$.\\
        \STATE Calculate training loss: $\mathcal{L} \leftarrow \frac{1}{|\mathcal{B}|}\sum_{(\mathbf{x}_j,\mathbf{y}_j)\in\mathcal{B}} \ell_\text{s}(\hat{\mathbf{y}}_j, \mathbf{y}_j)$.\\
        \STATE Update shadow model: $W \leftarrow W-\eta_{\text{shadow}}\cdot\nabla_{W} \mathcal{L}$.\\
    \ENDFOR
\ENDFOR
\STATE Initialize empty per-layer IR sets: $\{I_{\text{in}}^{(l)}=\varnothing\}_{l=1}^{L},\{I_{\text{out}}^{(l)}=\varnothing\}_{l=1}^{L}$.\\
\FOR{Each example $(\mathbf{x},\mathbf{y}) \in D_{\text{shadow}}^{\text{train}}$}
    \STATE Forward $\mathbf{x}$ through $F_{\text{shadow}}$ and append the corresponding IR, $I^{(l)}(\mathbf{x})$, to $I_{\text{in}}^{(l)}$ for each layer $l=\{1,...,L\}$.\\
\ENDFOR
\FOR{Each example $(\mathbf{x},\mathbf{y}) \in D_{\text{shadow}}^{\text{test}}$}
    \STATE Forward $\mathbf{x}$ through $F_{\text{shadow}}$ and append the corresponding IR, $I^{(l)}(\mathbf{x})$, to $I_{\text{out}}^{(l)}$ for each layer $l=\{1,...,L\}$..\\
\ENDFOR
\FOR{Layer $l=1$ to $L$}
    \STATE Initialize training set of $F_{\text{adv}}^{(l)}$: $D_{\text{adv}}^{(l)}= \varnothing$.
    \FOR{Each IR $I^{(l)}(\mathbf{x})$ in $I_{\text{in}}^{(l)}$}
        \STATE Append $(I^{(l)}(\mathbf{x}), \mathbf{z}=1)$ to $D_{\text{adv}}^{(l)}$.\\
    \ENDFOR
    \FOR{Each IR $I^{(l)}(\mathbf{x})$ in $I_{\text{out}}^{(l)}$}
        \STATE Append $(I^{(l)}(\mathbf{x}), \mathbf{z}=0)$ to $D_{\text{adv}}^{(l)}$.\\
    \ENDFOR
    \FOR{Epoch $e=0$ to $E_{\text{adv}}^{(l)}-1$}
        \FOR{Each mini-batch $\mathcal{B} \subset D_{\text{adv}}^{(l)}$ with $|\mathcal{B}|=B_{\text{adv}}^{(l)}$}
            \STATE Calculate $\hat{\mathbf{z}}_j = F_{\text{adv}}^{(l)}(I_j^{(l)}(\mathbf{x}); W^{(l)})$ for $(I_j^{(l)}(\mathbf{x}), \mathbf{z}_j)\in \mathcal{B}$.\\
            \STATE Calculate training loss: $\mathcal{L} \leftarrow \frac{1}{|\mathcal{B}|}\sum_{(\mathbf{x}_j,\mathbf{z}_j)\in\mathcal{B}} \ell_\text{a}(\hat{\mathbf{z}}_j, \mathbf{z}_j)$.\\
            \STATE Update adversary: 
            $W^{(l)} \leftarrow W^{(l)} - \eta_{\text{adv}}^{(l)} \cdot \nabla_{W^{(l)}} \mathcal{L}$.
        \ENDFOR
    \ENDFOR
    \STATE Evaluate $F_{\text{adv}}^{(l)}$ on full $D_{\text{adv}}^{(l)}$ to obtain error rate $\text{ER}^{(l)}$.
\ENDFOR
\STATE \textbf{return} Estimated per-layer MIA risks $\{\text{ER}^{(l)}\}_{l=1}^L$.\\
\end{algorithmic}
\label{alg:estimation}
\end{algorithm}

\section{Theoretical Proofs}

\subsection{Proof of Theorem 5.1}\label{proof of privacy}
\begin{proof}
The core requirement for applying the Gaussian mechanism in DP-SGD is to bound the $\ell_2$-sensitivity of the function evaluated on the dataset-in this case, the sum of layer-wise reweighted-and-clipped gradients $\sum_{i \in \mathcal{B}_t} \hat{G}_{t,i}$. The sensitivity quantifies the maximum change in this sum induced by modifying a single data point in $D$. Under Poisson sampling, this is bounded by the maximum $\ell_2$-norm of the individual contributions \cite{abadi2016deep}. Each per-sample gradient $G_{t,i} = [g_{t,i}^{(1)}, \ldots, g_{t,i}^{(L)}]$ is layer-wise reweighted-and-clipped as $\hat{g}_{t,i}^{(l)}=C_{t,i}\cdot w_t^{(l)}\cdot \frac{g_{t,i}^{(l)}}{\|g_{t,i}^{(l)}\|_2}$, where the layer-wise reweighting coefficients $w_t^{(l)}$ satisfy $\sum_{l=1}^L (w_t^{(l)})^2 = 1$. Thus, $\|\hat{G}_{t,i}\|_2 =C_{t,i}\leq C$, ensuring that the $\ell_2$-sensitivity of the sum is bounded by $C$. Crucially, the coefficients $w_t^{(l)}$ are derived solely from the gradients themselves and from estimates of each layer’s susceptibility to MIAs, obtained using a public shadow dataset. Their computation therefore incurs no additional privacy cost. Consequently, adding Gaussian noise $\mathcal{N}(0, C^2 \sigma^2 I)$ calibrated to sensitivity $C$ guarantees the pre-defined $(\varepsilon,\delta)$-differential privacy, and the standard moments accountant analysis for DP-SGD applies without modification.
\end{proof}

\subsection{Proof of Theorem 5.2}\label{proof of convergence}
\subsubsection{Assumptions}

Theorem 5.2 sketches the convergence analysis of LM-DP-SGD under standard non-convex settings. To this end, we begin by making the following standard assumptions.
\begin{assumption}[Smoothness] \label{assump:smooth}
The objective function $F:\mathbb{R}^d\to\mathbb{R}$ is differentiable and $L$-smooth; that is, for all $W,W'\in \mathbb{R}^d$,
{\[
F(W') \leq F(W) + \langle \nabla F(W), W' - W \rangle + \frac{L}{2} \|W' - W\|_2^2.
\]}
\end{assumption}
\begin{assumption}[Bounded Bias and Variance] \label{assump:bias_variance}
Let $\nabla F(W_t) = \mathbb{E}_{\mathbf{x}_i \sim D}[\nabla F(\mathbf{x}_i; W_t)]$ denote the expected true gradient of the objective at iteration $t$ under the data distribution $D$. We assume the expected mini-batch-averaged layer-wise reweighted-and-clipped gradient $\mathbb{E}[\bar{G}_t]$ incurs a bounded bias $b_t$ relative to $\nabla F(W_t)$:
\[
\mathbb{E}[\bar{G}_t] = \mathbb{E}_{\mathbf{x}_i\sim D}[\hat{G}_{t,i}]=\nabla F(W_t) + b_t , \quad \text{with} \quad\|b_t\|_2 \le \xi,
\]
where $b_t$ captures the bias induced by layer-wise reweighted clipping, upper bounded by $\xi$. In addition, stochastic mini-batch sampling contributes an additional bounded sampling error $V_t$, so that
\[
\bar{G}_t = \mathbb{E}_{\mathbf{x}_i\sim D}[\hat{G}_{t,i}]+V_t=\mathbb{E}[\bar{G}_t]+V_t , \quad \text{with} \quad\|V_t\|_2 \le V,
\]
where $V$ bounds the sampling deviation and is typically negligible in practice. Thus, the mini-batch-averaged layer-wise reweighted-and-clipped gradient decomposes as $\bar{G}_t=\nabla F(W_t) + b_t+V_t$.
Moreover, for the differentially private update, let $Z_t = \mathcal{N}(0, C^2 \sigma^2 I)$ denote the added Gaussian noise, then the update gradient is $\hat{G}_t = \bar{G}_t + \frac{1}{|\mathcal{B}_t|} Z_t$. We bound the variance of $\hat{G}_t$ by separating the contribution of sampling variability and the explicit Gaussian perturbation:
\begin{align*}
    \mathbb{E}\|\hat{G}_t - \mathbb{E}[\hat{G}_t]\|_2^2 &=\mathbb{E}\|\bar{G}_t + \frac{1}{|\mathcal{B}_t|} Z_t - \mathbb{E}[\bar{G}_t]\|_2^2 \\&\le \mathbb{E}\|\bar{G}_t - \mathbb{E}[\bar{G}_t]\|_2^2 + \mathbb{E}\|\tfrac{1}{|\mathcal{B}_t|} Z_t\|_2^2=\mathbb{E}\|V_t\|_2^2 + \frac{C^2 \sigma^2 d}{|\mathcal{B}_t|^2} \\&\approx \mathbb{E}\|V_t\|_2^2 + \frac{C^2\sigma^2d}{q^2N^2} \le V^2+\frac{C^2\sigma^2d}{q^2N^2},
\end{align*}
where $d$ is the dimensionality of the parameter space $W$. Since $\mathcal{B}_t$ is Poisson-sampled from $D$, $|\mathcal{B}_t| \approx qN$ holds.
\end{assumption}

\begin{assumption}[Bounded Gradients] \label{assump:bounded_grad}
We assume that the $\ell_2$-norm of $\nabla F(W_t)$ is bounded, $\|\nabla F(W_t)\|_2 \le M$. And $\bar{G}_t$ admits $\|\bar{G}_t\|_2 \le C$, which follows from $\|\bar{G}_t\|_2 = \left\lVert \frac{1}{|\mathcal{B}_t|}\sum_{\mathbf{x}_i \in \mathcal{B}_t} \hat{G}_{t,i}\right\lVert_2 \le \frac{1}{|\mathcal{B}_t|}\sum_{\mathbf{x}_i \in \mathcal{B}_t} \|\hat{G}_{t,i}\|_2 = \frac{1}{|\mathcal{B}_t|}\sum_{\mathbf{x}_i \in \mathcal{B}_t} C_{t,i} \le C$.
\end{assumption}
\subsubsection{Detailed Proof}
\begin{proof}
    \textbf{One-Step Progress}. Leveraging the $L$-smoothness property (Assumption~\ref{assump:smooth}), the following inequality holds:
    \begin{align*}
            F(W_{t+1}) &\le F(W_t) + \langle \nabla F(W_t), W_{t+1} - W_t \rangle + \frac{L}{2} \|W_{t+1} - W_t\|_2^2 \\
    &= F(W_t) - \eta_t \langle \nabla F(W_t), \hat{G}_t \rangle + \frac{L \eta_t^2}{2}\|\hat{G}_t\|_2^2.
    \end{align*}
    Conditioning on $W_t$ and taking expectation over the stochasticity in $\hat{G}_t$ (from mini-batch sampling and noise perturbation),
    \begin{align*}
        \mathbb{E}[F(W_{t+1}) | W_t] \le F(W_t) - \eta_t \langle \nabla F(W_t), \mathbb{E}[\hat{G}_t | W_t] \rangle + \frac{L \eta_t^2}{2} \mathbb{E}[\|\hat{G}_t\|_2^2 | W_t].
    \end{align*}
    From Assumption~\ref{assump:bias_variance}, we know that $\mathbb{E}[\hat{G}_t | W_t] = \mathbb{E}[\bar{G}_t | W_t] = \nabla F(W_t) + b_t $. Substituting this into the inequality:
    \begin{align*}
        \mathbb{E}[F(W_{t+1}) | W_t] \le F(W_t) - \eta_t \langle \nabla F(W_t), \nabla F(W_t) + b_t \rangle + \frac{L \eta_t^2}{2} \mathbb{E}[\|\hat{G}_t\|_2^2 | W_t].
    \end{align*}
    The second moment of the stochastic gradient estimator, $\mathbb{E}[\|\hat{G}_t\|_2^2]$, can be decomposed into the squared norm of its mean and its variance: $\mathbb{E}[\|\hat{G}_t\|_2^2] = \|\mathbb{E}[\hat{G}_t]\|_2^2 + \text{Var}(\hat{G}_t)$, where $\|\mathbb{E}[\hat{G}_t]\|_2^2=\|\mathbb{E}[\bar{G}_t]\|_2^2=\|\nabla F(W_t)+b_t\|_2^2$ and $\text{Var}(\hat{G}_t) = \mathbb{E}\|\hat{G}_t - \mathbb{E}[\hat{G}_t]\|_2^2 = \mathbb{E}\|\hat{G}_t-\mathbb{E}[\bar{G}_t]|_2^2\le V^2 + \frac{C^2 \sigma^2 d}{|\mathcal{B}_t|^2}$ (Assumption~\ref{assump:bias_variance}). Combining these elements, we obtain the overall bound: $\mathbb{E}[\|\hat{G}_t\|_2^2] \le \|\nabla F(W_t) + b_t\|_2^2 + V^2 + \frac{C^2 \sigma^2 d}{|\mathcal{B}_t|^2}$. Substituting back into the inequality for $\mathbb{E}[F(W_{t+1}) | W_t]$:
    \begin{align*}
    \mathbb{E}[F(W_{t+1}) | W_t] \le F(W_t) - \eta_t \|\nabla F(W_t)\|_2^2 - \eta_t \langle \nabla F(W_t), b_t \rangle +  \frac{L \eta_t^2}{2} \left( \|\nabla F(W_t) + b_t\|_2^2 + V^2 + \frac{C^2 \sigma^2 d}{|\mathcal{B}_t|^2} \right).
    \end{align*}
    As $\langle \nabla F(W_t), b_t \rangle \ge -\|\nabla F(W_t)\|_2 \|b_t\|_2 \ge -M \xi$ and $\|\nabla F(W_t) + b_t\|_2^2 = \|\nabla F(W_t)\|_2^2 + \|b_t\|_2^2 + 2\langle \nabla F(W_t), b_t \rangle \le M^2 + \xi^2 + 2M\xi$,
    \begin{align*}
    \mathbb{E}[F(W_{t+1}) | W_t] \le F(W_t) - \eta_t \|\nabla F(W_t)\|_2^2 + \eta_t M \xi +\frac{L \eta_t^2}{2} \left( M^2 + \xi^2 + 2M\xi +V^2 + \frac{C^2 \sigma^2 d}{q^2 N^2} \right),
    \end{align*}
    where $|\mathcal{B}_t| \approx qN$. Rearranging terms to isolate gradient norm:
    \begin{align*}
    \eta_t \|\nabla F(W_t)\|_2^2 \le F(W_t) - \mathbb{E}[F(W_{t+1}) | W_t] + \eta_t M \xi +  \frac{L \eta_t^2}{2} \left( (M+\xi)^2 + V^2 + \frac{C^2 \sigma^2 d}{q^2 N^2} \right)
    \end{align*}
    \textbf{Telescoping and Convergence Rate.} Taking the total expectation and summing the inequality over $t=\{0, \dots, T-1\}$ ,and for simplicity, assuming a constant learning rate $\eta_t$:
    \begin{align*}
    &\quad \sum_{t=0}^{T-1} \eta_t \|\nabla F(W_t)\|_2^2 \\&\le \sum_{t=0}^{T-1} (F(W_t) - F(W_{t+1})) + \sum_{t=0}^{T-1} \eta_t M \xi + \sum_{t=0}^{T-1} \frac{L \eta_t^2}{2} \left( (M+\xi)^2 + V^2 + \frac{C^2 \sigma^2 d}{q^2 N^2} \right) \\
    &= F(W_0) - F(W_T) + T \eta_t M \xi + T \frac{L \eta_t^2}{2} \left( (M+\xi)^2 + V^2 + \frac{C^2 \sigma^2 d}{q^2 N^2} \right).
    \end{align*}
    Let $F^*$ be the optimal function value such that $ F(W_T)\ge F^*$, then,
    \begin{align*}
        \eta_t \sum_{t=0}^{T-1} \|\nabla F(W_t)\|_2^2 \le F(W_0) - F^* + T \eta_t M \xi + T \frac{L \eta_t^2}{2} \left( (M+\xi)^2 + V^2 + \frac{C^2 \sigma^2 d}{q^2 N^2} \right).
    \end{align*}
    Dividing by $T\eta_t$:
    \begin{align*}
        \frac{1}{T} \sum_{t=0}^{T-1} \|\nabla F(W_t)\|_2^2 \le \frac{F(W_0) - F^*}{T \eta_t} + M \xi + \frac{L \eta_t}{2} \left( (M+\xi)^2 + V^2 + \frac{C^2 \sigma^2 d}{q^2 N^2} \right).
    \end{align*}
    Setting the learning rate $\eta_t = \eta / \sqrt{T}$, and choosing $\eta = \Theta(1/L)$ (i.e., $\eta \propto 1/L$), the convergence guarantee is:
    \begin{align*}
        \frac{1}{T} \sum_{t=0}^{T-1} \|\nabla F(W_t)\|_2^2\le O\left( \frac{L(F(W_0) - F^*)}{\sqrt{T}} \right) + O(M \xi) + O\left( \frac{(M+\xi)^2 + V^2 + \frac{C^2\sigma^2 d}{q^2N^2}}{\sqrt{T}} \right).
    \end{align*}
    This result aligns with the convergence guarantee presented in Theorem 5.2. For sufficiently large $T$, the terms scaling with $\frac{1}{\sqrt{T}}$ diminish, leaving the bias term $O(M\xi)$ as a potential bottleneck for convergence to a true zero gradient.
\end{proof}

\section{Detailed Analysis of Bias Term $b_t$}\label{bias}
In Stochastic Gradient Descent (SGD), the gradient for $\mathbf{x}_i$ in $D$ at iteration $t$ is denoted by $G_{t,i} = [g_{t,i}^{(1)}, \dots, g_{t,i}^{(L)}]$, where $L$ is the total number of layers and $g_{t,i}^{(l)}$ denotes the gradient with respect to the parameters of the $l$-th layer. LM-DP-SGD extends this formulation by introducing the expected layer-wise reweighted-and-clipped gradient for each layer $l$:
\begin{equation} \label{eq:appendix_avg_weighted_grad}
\mathbb{E}[\hat{g}_{t,i}^{(l)}] = \frac{1}{N}\sum_{\mathbf{x}_i\in D} C_{t,i} \frac{w_t^{(l)}g_{t,i}^{(l)}}{\|g_{t,i}^{(l)}\|_2},
\end{equation}
where $C_{t,i} = \min(C, \|G_{t,i}\|_2)$ is the $\ell_2$-norm of the clipped gradient of $\mathbf{x}_i$ in standard DP-SGD, and $w^{(l)}_t$ is the reweighting coefficient for layer $l$ at iteration $t$. Then $\mathbb{E}[\hat{G}_{t,i}] = \left[\mathbb{E}[\hat{g}_{t,i}^{(1)}], \dots, \mathbb{E}[\hat{g}_{t,i}^{(L)}]\right]$, with the layer-wise reweighting coefficients constrained by $\sum_{l=1}^L (w^{(l)}_t)^2 = 1$.

The convergence analysis relies on the assumption that $\mathbb{E}[\hat{G}_{t,i}]= \nabla F(W_t) + b_t$, where $\nabla F(W_t)=\mathbb{E}_{\textbf{x}_i \sim D}[\nabla F(\textbf{x}_i; W_t)]$ is the expected true gradient, $b_t$ is the bias induced by layer-wise reweighted clipping and is assumed to be bounded as $\|b_t\|_2 \leq \xi$ for all $t$. This section explores the structure of $b_t$ and how the layer-specific reweighting coefficients $w^{(l)}_t$ influence it, with the goal of minimizing the bias upper bound $\xi$.

\subsection{Optimization Problem}

We compute $\mathbb{E}[\hat{g}_{t,i}^{(l)}]$ with respect to the private training set $D$ as:

\begin{align*}
\mathbb{E}[\hat{g}_{t,i}^{(l)}] = \mathbb{E}_{\mathbf{x}_i \sim D}\left[   C_{t,i} \frac{w_t^{(l)}g_{t,i}^{(l)}}{\|g_{t,i}^{(l)}\|_2} \right] = w^{(l)}_t \cdot \left[ \frac{1}{N} \sum_{\textbf{x}_i \in D} C_{t,i} \frac{g_{t,i}^{(l)}}{\|g_{t,i}^{(l)}\|_2} \right].
\end{align*}
We define:
\begin{equation} \label{eq:appendix_u_def}
u_t^{(l)} = \mathbb{E}_{\textbf{x}_i \sim D} \left[ C_{t,i} \frac{g_{t,i}^{(l)}}{\|g_{t,i}^{(l)}\|_2} \right]=\left[ \frac{1}{N} \sum_{\textbf{x}_i \in D} C_{t,i} \frac{g_{t,i}^{(l)}}{\|g_{t,i}^{(l)}\|_2} \right],
\end{equation}
whose norm can be bounded using Jensen's inequality:
\begin{align*}
\|u_t^{(l)}\|_2 &= \left\| \mathbb{E}_{\textbf{x}_i \sim D} \left[ C_{t,i} \frac{g_{t,i}^{(l)}}{\|g_{t,i}^{(l)}\|_2} \right] \right\|_2
\le \mathbb{E}_{\textbf{x}_i \sim D} \left\| C_{t,i} \frac{g_{t,i}^{(l)}}{\|g_{t,i}^{(l)}\|_2} \right\|_2 \\&= \mathbb{E}_{\textbf{x}_i \sim D} \left[ C_{t,i} \left\| \frac{g_{t,i}^{(l)}}{\|g_{t,i}^{(l)}\|_2} \right\|_2 \right] = \mathbb{E}_{\textbf{x}_i \sim D} [C_{t,i}]\leq C.
\end{align*}
So the expected layer-wise reweighted-and-clipped gradient is:
\[
\mathbb{E}[\hat{G}_{t,i}] = [w^{(1)}_t u_t^{(1)}, \dots, w^{(L)}_t u_t^{(L)}].
\]
The bias term $b_t = \mathbb{E}[\hat{G}_{t,i}] - \nabla F(W_t)$ can be expressed layer-wise. We decompose $\nabla F(W_t)$ as $\nabla F(W_t) = [\nabla F^{(1)}(W_t), \dots, \nabla F^{(L)}(W_t)]$ with $\nabla F^{(l)}(W_t) = \mathbb{E}_{\textbf{x}_i \sim D}[g_{t,i}^{(l)}]$ the layer-$l$ component. Then:
\[
b_t = [w^{(1)}_t u_t^{(1)} - \nabla F^{(1)}(W_t), \dots, w^{(L)}_t u_t^{(L)} - \nabla F^{(L)}(W_t)].
\]
The squared Euclidean norm of the bias is then:
\begin{equation} \label{eq:appendix_bias_norm_sq}
\|b_t\|_2^2 = \sum_{l=1}^L \left\| w^{(l)}_t u_t^{(l)} - \nabla F^{(l)}(W_t) \right\|_2^2.
\end{equation}
The objective is to determine the layer-wise reweighting vector $w_t = [w^{(1)}_t, \dots, w^{(L)}_t]$ subject to $\sum_l (w^{(l)}_t)^2 = 1$, that minimizes the bias norm $\|b_t\|_2$ and thus its upper bound $\xi$. Formally, we solve:
\begin{equation}
    \min_{w_t\in R^L}\|b_t\|_2,\quad \text{s.t.} \sum_l (w^{(l)}_t)^2 = 1.
    \label{eq_solve}
\end{equation}

\subsection{Optimal Reweighting for Bias Minimization}\label{appendix lag}

Expanding Eq.(\ref{eq_solve}), we obtain:
\begin{align*}
    \|b_t\|_2^2= \sum_{l=1}^L \left( (w^{(l)}_t)^2 \|u_t^{(l)}\|_2^2 - 2 w^{(l)}_t (u_t^{(l)}  \nabla F^{(l)}(W_t)) + \|\nabla F^{(l)}(W_t)\|_2^2 \right).
\end{align*}

Let $A_l = \|u_t^{(l)}\|_2^2$ and $B_l = u_t^{(l)} \cdot \nabla F^{(l)}(W_t)$. Disregarding the constant term $\sum_l \|\nabla F^{(l)}(W_t)\|_2^2$, we aim to minimize:
\[
g(\mathbf{w}_t) = \sum_{l=1}^L \left( (w^{(l)}_t)^2 A_l - 2 w^{(l)}_t B_l \right)
\]
subject to the constraint $h(\mathbf{w}_t) = \sum_{l=1}^L (w^{(l)}_t)^2 - 1 = 0$. 

Applying the method of Lagrange multipliers, we define
\[
\mathcal{L}(\mathbf{w}_t, \lambda) =  \sum_{l=1}^L \left( (w^{(l)}_t)^2 A_l - 2 w^{(l)}_t B_l \right) - \lambda \left( \sum_{l=1}^L (w^{(l)}_t)^2 - 1 \right).
\]
Taking the partial derivative with respect to $w^{(l)}_t$ and enforcing stationarity yields $\frac{\partial \mathcal{L}}{\partial w^{(l)}_t} = 2 w^{(l)}_t A_l - 2 B_l - 2 \lambda w^{(l)}_t = 0$. Hence, the optimal solution for each layer is
\begin{equation}
w^{(l)}_t = \frac{B_l}{A_l - \lambda} = \frac{u_t^{(l)} \cdot \nabla F^{(l)}(W_t)}{\|u_t^{(l)}\|_2^2 - \lambda}.  
\label{equation13}
\end{equation}
Substituting Eq.(\ref{equation13}) into the constraint $\sum_{l=1}^L (w^{(l)}_t)^2 = 1$:
\begin{equation} \label{eq:appendix_lambda_eq}
\sum_{l=1}^L \left( \frac{u_t^{(l)} \cdot \nabla F^{(l)}(W_t)}{\|u_t^{(l)}\|_2^2 - \lambda} \right)^2 = 1.
\end{equation}
The Lagrange multiplier $\lambda$ is thus determined by solving Eq.(\eqref{eq:appendix_lambda_eq}), which uniquely specifies the optimal weights $w^{(l)}_t$ that minimize the bias $\|b_t\|_2^2$ under the unit-norm constraint.

Mathematically, solving Eq.(\ref{eq:appendix_lambda_eq}) for $\lambda$ is intricate, as it potentially requires the identification of roots of a high-degree polynomial. In practice, an efficient bisection search can be used to obtain $\lambda$. The resulting $\lambda$ determines the optimal coefficients $w^{(l)}_t$, which minimize $\|b_t\|_2$ under the unit-norm constraint. In our experiments, we benchmark this LM-DP-SGD variant against other DP-SGD methods. To reduce computational overhead, we omit the bias from mini-batch sampling, which is typically negligible in practice. Under this simplification, the expected true gradient of layer $l$ at iteration $t$ can be approximated by the mini-batch average $\nabla F^{(l)}(W_t) \simeq \frac{1}{|\mathcal{B}_t|}\sum_{\mathbf{x}_t\in \mathcal{B}_t} g_{t,i}^{(l)}$, and similarly $u_t^{(l)}\simeq \frac{1}{|\mathcal{B}_t|}\sum_{\mathbf{x}_t\in \mathcal{B}_t}\left[C_{t,i}\frac{g_{t,i}^{(l)}}{\|g_{t,i}^{(l)}\|_2}\right]$. 

\section{Ablation Studies}\label{ablation}

We perform ablation studies of LM-DP-SGD on D-CNN trained with CIFAR10. Specifically, for layer-wise MIA risk estimation, we evaluate the robustness of the assessment under varying shadow dataset distributions. For private training, we analyze the effects of the privacy budget $\varepsilon$, the clipping threshold $C$ and the heterogeneity emphasis factor $r$ on model utility.

\subsection{Impact of Shadow Dataset Distribution}\label{shadow_datset}

In this section, we select shadow datasets whose distributions are either similar to the private training set (ImageNet) or dissimilar (SVHN, MNIST, VGGFace2) to evaluate MIA risk of IRs. For MNIST, each $28\times28$ grayscale image is resized to  $32\times32$ using bilinear interpolation and converting it to 3-channel RGB via channel replication to match the CIFAR10 input format. For VGGFace2, faces are detected and aligned using a standard five-point similarity transform. From the aligned image, we extract a square center crop with a 10\% margin around the face bounding box to preserve facial structure, then downsample to $32\times32$ using bicubic interpolation. To match CIFAR10’s number of classes, we construct an identity-classification shadow task using the ten identities with the largest number of samples. As reported in Table \ref{table_shadow}, across different shadow datasets, MIA error rates at different layers remain comparable and consistently decrease with increasing layer depth, indicating that layer-wise MIA risk estimation is robust to shadow dataset distribution. Since subsequent training further amplify layer-wise heterogeneity to highlight differences in layer risk, the influence of shadow-dataset selection on the private training can be safely disregarded.

\begin{table}[!htp]
\small
\caption{Layer-wise MIA error rates across shadow datasets.}
    \centering
    \begin{tabular}{ccc}
    \toprule
    Shadow Dataset  & Index of Intermediate Convolutional-Layer & Layer-wise MIA Error Rates (\%)\\
    \midrule
         ImageNet &\multirow{4}{*}{1, 3, 5, 7, 8} & 40.4, 35.1, 31.7, 28.4, 26.9\\
         SVHN &&40.2, 35.0, 31.4, 28.1, 26.7\\
         MNIST && 39.9, 34.8, 31.4, 28.1, 26.5\\
         VGGFace2 &&40.8, 35.5, 31.9, 28.7, 27.2\\
    \bottomrule
    \end{tabular}
    \label{table_shadow}
\end{table}

\subsection{Impact of Privacy Budget}

A smaller $\varepsilon$ signifies stronger privacy guarantees, and thus a larger noise scale, which degrades utility and training stability. As shown in Fig. \ref{1(a)}, test accuracy consistently declines at small $\varepsilon$ values, as heavier perturbations obscure true gradient directions and hinder effective model convergence; excessively small $\varepsilon$ can even induce numerical instability, leading to substantial drops in test accuracy (e.g., $\varepsilon=3.0$). Conversely, increasing $\varepsilon$ reduces the noise magnitude, leading to significant improvements in test accuracy. Notably, beyond a certain threshold (e.g., $\varepsilon \geq 8.0$), test accuracy plateaus with no further substantive utility improvements. Correspondingly, the $\ell_2$-norm of the bias $b_t$ increases as $\varepsilon$ decreases: larger noise results in more gradients with $\ell_2$-norms exceeding the fixed clipping threshold, amplifying clipping bias, and, under extreme noise, propagating instabilities that produce invalid (NaN) gradients and model collapse (e.g., $\varepsilon=3.0$). Increasing $\varepsilon$ mitigates these effects by lowering the noise scale, reducing layer-wise reweighted clipping bias, and enhancing gradient fidelity—culminating in marked accuracy gains.

\begin{figure}[ht]
  \centering
  \subfigure[Test accuracy.]{
		\includegraphics[width=0.40\textwidth]{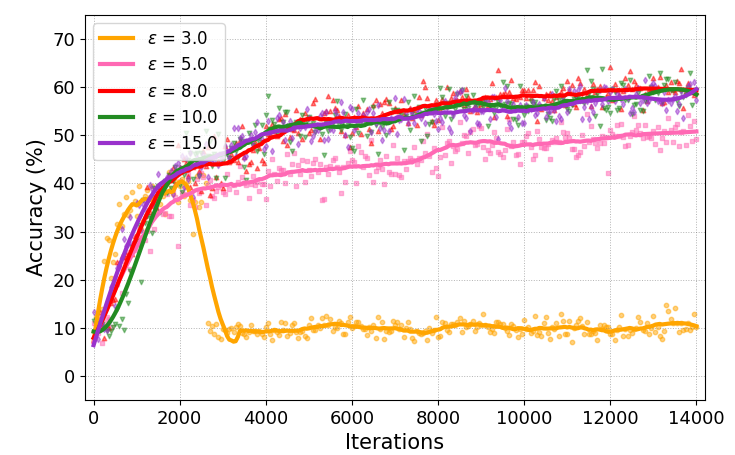}
		\label{1(a)}
    }
  \subfigure[$\ell_2$-norm of $b_t$.]{
		\includegraphics[width=0.40\textwidth]{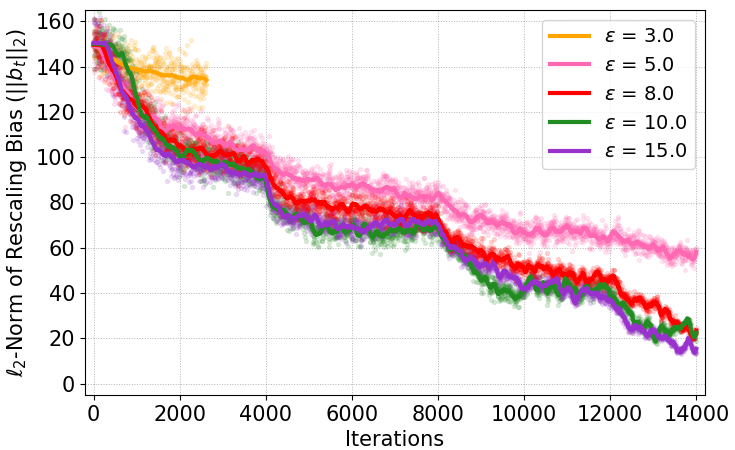}
		\label{1(b)}
    }
    \caption{Impact of privacy budget $\varepsilon$.} 
  \label{privacy budget pb1}
\end{figure}

\subsection{Impact of Clipping Threshold}

This section presents an ablation study on the clipping threshold $C$. As shown in Figure \ref{2(a)}, small values of $C$ impose overly restrictive constraints on gradient norms, reducing update magnitudes and slowing optimization. Such behavior may trap the optimizer in suboptimal minima, resulting in degraded test accuracy. In contrast, larger $C$ values relax gradient constraints; after layer-wise reweighted clipping, gradients with smaller norms better retain their global magnitudes, improving gradient fidelity and enabling faster convergence to higher test accuracy. Figure \ref{2(b)} further illustrates a negative correlation between $C$ and $\ell_2$-norm of $b_t$. The pronounced bias observed at small $C$ stems from two sources: (i) a higher proportion of gradients exceeding the threshold, which leads to more frequent clipping; and (ii) a stronger per-gradient $\ell_2$-norm reduction.

\begin{figure}[ht]
  \centering
  \subfigure[Test accuracy.]{
		\includegraphics[width=0.40\textwidth]{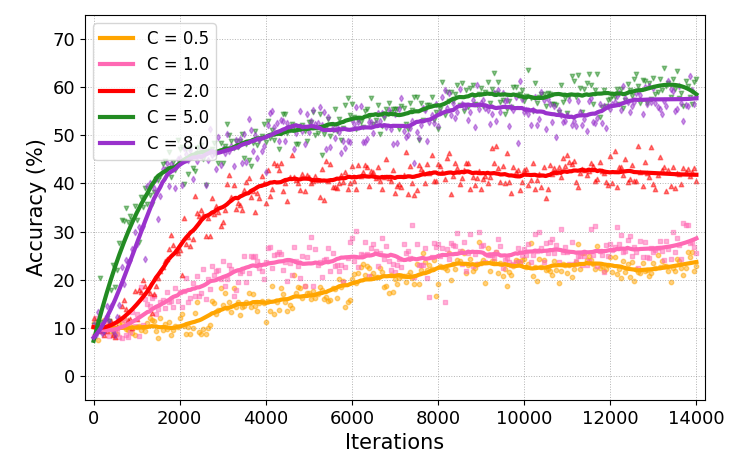}
		\label{2(a)}
    }
  \subfigure[$\ell_2$-norm of $b_t$.]{
		\includegraphics[width=0.40\textwidth]{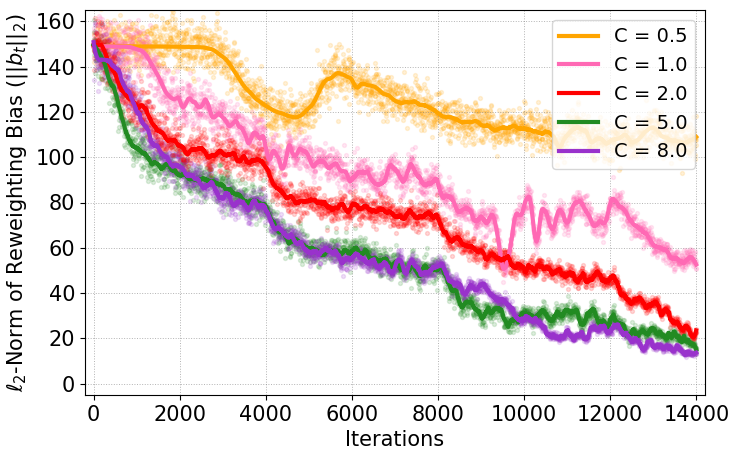}
		\label{2(b)}
    }
    \caption{Impact of clipping threshold $C$.} 
  \label{privacy budget pb2}
\end{figure}

\subsection{Impact of Layer-wise MIA-Risk Heterogeneity Emphasis Factor}

This section examines how the layer-wise MIA-risk heterogeneity emphasis factor $r$ affects model learning dynamics and resilience of IRs against MIAs. As depicted in Figure \ref{privacy budget pb3}, a larger $r$ accentuates the heterogeneity in layer-wise vulnerability to MIAs. This results in reduced model utility, primarily because higher $r$ values intensify the protection of more sensitive layers by shrinking their $\ell_2$-norms. These layers are typically closer to the output, thus diminishing the model’s predictive accuracy. Additionally, increasing $r$ enlarges $\|b_t\|_2$, since stronger layer-wise reweighting accentuates gradient-norm disparities, introducing greater bias through layer-wise reweighted clipping. Furthermore, Table \ref{tablerrrr} quantitatively confirms that increasing $r$ strengthens privacy protection in deeper, more sensitive layers, evidenced by lower MIA accuracy. However, this improvement comes at the cost of weaker privacy in shallower layers, which exhibit heightened vulnerability to MIAs.

In summary, while a larger $r$ bolsters the privacy of highly sensitive layers, it simultaneously compromises model utility. Thus, practical deployment requires careful tuning of $r$ to balance privacy preservation and model performance.

\begin{figure}[ht]
  \centering
  \subfigure[Test accuracy.]{
		\includegraphics[width=0.40\textwidth]{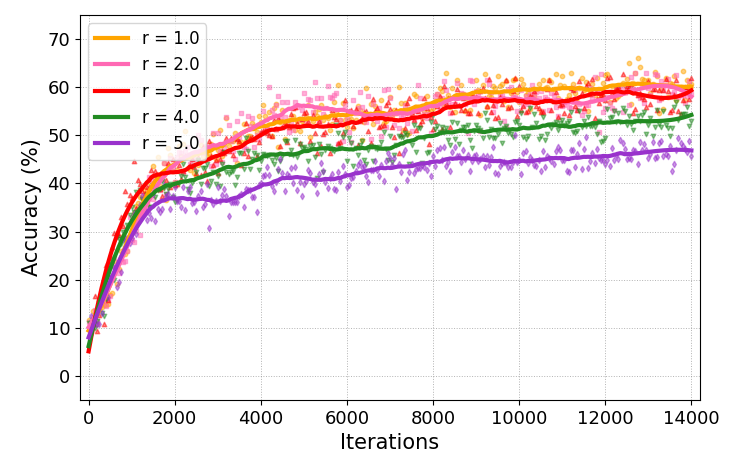}
    }
  \subfigure[$\ell_2$-norm of $b_t$.]{
		\includegraphics[width=0.40\textwidth]{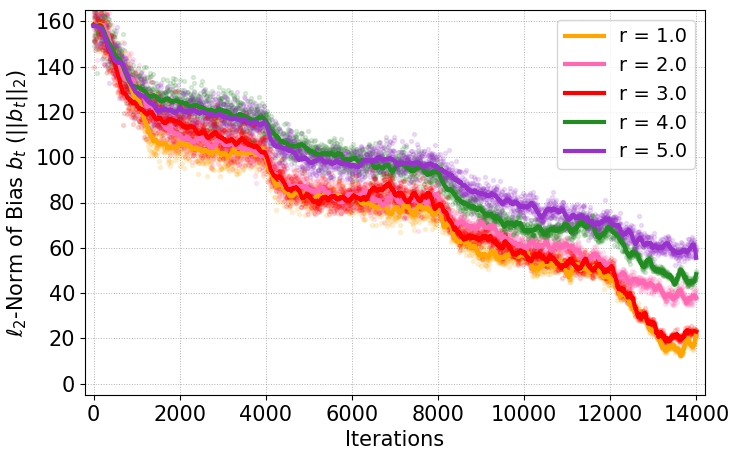}
    }
    \caption{Impact of heterogeneity emphasis factor $r$.} 
  \label{privacy budget pb3}
\end{figure}

\begin{table}[!htp]
\small
\caption{MIA accuracy under varying $r$ values.}
    \centering
    \begin{tabular}{ccc}
    \toprule
    Heterogeneity Emphasis Factor $r$  & Index of Intermediate Convolutional-Layer & MIA Accuracy under LM-DP-SGD (\%)\\
    \midrule
         1.0 &\multirow{5}{*}{1, 3, 5, 7, 8} & 57.4, 63.7, 65.5, 67.5, 69.4\\
         2.0 &&57.8, 63.9, 65.4, 67.2, 69.1\\
         3.0 &&59.0, 64.3, 65.2, 66.5, 67.9\\
         4.0 &&60.3, 64.8, 64.9, 65.8, 67.2\\
         5.0 &&60.9, 65.2, 64.6, 65.4, 66.9\\
    \bottomrule
    \end{tabular}
    \label{tablerrrr}
\end{table}

\section{Running Time and Resources}\label{runningtime}

Tables~\ref{running_time} and \ref{gpu_usage} present the average per-iteration runtime (in seconds) and the peak GPU memory utilization (in MiB) of different DP-SGD frameworks during private training. Compared to baselines that perform direct clipping of global per-example gradients, LM-DP-SGD incurs additional overhead from layer-wise reweighted clipping in both computation and memory. Empirically, the runtime remains comparable to baselines, and the modest increase in GPU utilization is limited and remains practically acceptable.

\begin{table}[!ht]
    \centering
    \caption{Average running time (seconds) per iteration.}
    \resizebox{0.65\textwidth}{!}{
    \begin{tabular}{cccccc}
    \hline
    \toprule
        Dataset & Model & DP-SGD & Auto-S/NSGD & DP-PSAC & LM-DP-SGD \\
    \midrule
             MNIST & S-CNN& 0.0984 & 0.1071 &  0.1049 &0.1078\\
             CIFAR10 & D-CNN  & 0.1516 & 0.1569 & 0.1536 &0.1632\\
             CIFAR100 & ResNet-18 & 0.2735 & 0.2845 & 0.2857 &0.2860\\ 
             CelebA & VGG-16  & 0.2557 & 0.2593 &  0.2615&0.2654\\
             
    \bottomrule
    \end{tabular}}
    \label{running_time}
    \end{table}

    \begin{table}[!ht]
    \centering
    \caption{Peak GPU memory utilization (MiB).}
    \resizebox{0.65\textwidth}{!}{
    \begin{tabular}{cccccc}
    \hline
    \toprule
        Dataset & Model & DP-SGD & Auto-S/NSGD & DP-PSAC & LM-DP-SGD \\
    \midrule
             MNIST & S-CNN& 2342 & 3360 & 3526 & 3734\\
             CIFAR10 & D-CNN  & 14524 & 15498 &  15926 &19918 \\
             CIFAR100 & ResNet-18 & 40378 & 41295 & 413452 &46578\\ 
             CelebA & VGG-16 & 36304 & 37459 & 37857 &40352 \\
             
    \bottomrule
    \end{tabular}}
    \label{gpu_usage}
\end{table}

\end{document}